\newcommand{\paperKeywords}{self-supervised graph representation learning, clustering}
\newcommand{\rawMethodName}{CARL-G}
\newcommand{\method}{\mbox{%
    \small\textsf{\rawMethodName{}}}\xspace}
\newcommand{\paperTitle}{\rawMethodName{}: Clustering-Accelerated Representation Learning on Graphs}
\def\eqref#1{equation~\ref{#1}}
\def\1{\bm{1}}
\def\va{{\bm{a}}}
\def\vb{{\bm{b}}}
\def\vh{{\bm{h}}}
\def\vx{{\bm{x}}}
\def\mA{{\bm{A}}}
\def\mD{{\bm{D}}}
\def\mH{{\bm{H}}}
\def\mW{{\bm{W}}}
\def\mX{{\bm{X}}}
\def\mZ{{\bm{Z}}}
\DeclareMathAlphabet{\mathsfit}{\encodingdefault}{\sfdefault}{m}{sl}
\SetMathAlphabet{\mathsfit}{bold}{\encodingdefault}{\sfdefault}{bx}{n}
\def\gC{{\mathcal{C}}}
\def\gE{{\mathcal{E}}}
\def\gG{{\mathcal{G}}}
\def\gL{{\mathcal{L}}}
\def\gU{{\mathcal{U}}}
\def\gV{{\mathcal{V}}}
\newcommand{\E}{\mathbb{E}}
\newcommand{\R}{\mathbb{R}}
\DeclareMathOperator*{\argmin}{arg\,min}
\long\def\IfNoTokA #1\IfNoTokB \IfNoTokB {}
\long\def\IfNoTokB \IfNoTokC #1#2{#1}    
\long\def\IfNoTokC           #1#2{#2}%
\newcommand*{\tp}{%
  {\mathpalette\@transpose{}}%
}
\newcommand*{\@transpose}[2]{%
  \raisebox{\depth}{$\m@th#1\intercal$}%
}
\long\def\IfNoTokens #1{\IfNoTokA\IfNoTokB #1\IfNoTokB\IfNoTokB\IfNoTokC }
\definecolor{aqua}{rgb}{0.0, 1.0, 1.0}
\newcommand*{\scale}[2][4]{\scalebox{#1}{$#2$}}
\newcommand{\methodname}{\method{}}
\newcommand{\wtodo}[1]{{\textcolor{blue}{\textbf{[\IfNoTokens{#1}{TODO}{TODO: #1}]}}}}
\newcommand{\cmark}{\ding{51}}%
\newcommand{\xmark}{\ding{55}}%
\newcommand{\tDist}{\textsc{Dist}}
\newcommand{\neigh}{{\mathscr{N}}}
\newcommand{\Ebr}[1]{\E \left[ #1 \right]}
\newcommand{\Ebrs}[2]{\E_{#1} \left[ #2 \right]}
\newcommand{\coauthorcs}{\texttt{Coauthor-CS}}
\newcommand{\coauthorphysics}{\texttt{Coauthor-Physics}}
\newcommand{\amazoncomputers}{\texttt{Amazon-Computers}}
\newcommand{\amazonphotos}{\texttt{Amazon-Photos}}
\newcommand{\wikics}{\texttt{Wiki-CS}}
\theoremstyle{plain}
\newtheorem{claim}{Claim}
\newcommand{\gCent}{{\boldsymbol{\mu}}}
\newcommand{\lCent}[1]{{\boldsymbol{\mu}}_{#1}}
\newcommand{\vrc}{\textsc{VRC}}
\newcommand{\svrc}{\textsc{vrc}} %
\newcommand{\meanSil}{\textsc{ms}}
\newcommand{\sil}{\textsc{Sil}}
\newcommand{\pcAssign}{{\gU}} %
\newcommand{\silTar}{\tau_{\sil}} %
\newcommand{\vrcTar}{\tau_{\svrc}} %
\newcommand{\simp}{\textsc{sim}} %
\newcommand{\boldMethod}{\textbf{\method{}}}
\newcommand{\silMethod}{\ensuremath{\method{}_\sil{}}}
\newcommand{\simpMethod}{\ensuremath{\method{}_\simp{}}}
\newcommand{\vrcMethod}{\ensuremath{\method{}_\vrc{}}}
\newcommand{\numClusters}{\ensuremath{c}}
\newcommand{\arxivOnly}[1]{}
\newcommand{\kmeans}{$k$-means}
\setlist{nolistsep}
\newcounter{ALC@tempcntr}%
\begin{document}

\title{\paperTitle{}}

\author{William Shiao}
\orcid{0000-0001-5813-2266}
\email{wshia002@ucr.edu}
\affiliation{%
  \institution{University of California, Riverside}
  \city{Riverside}
  \state{CA}
  \country{USA}
}

\author{Uday Singh Saini}
\orcid{0000-0002-8561-5527}
\email{usain001@ucr.edu}
\affiliation{%
  \institution{University of California, Riverside}
  \city{Riverside}
  \state{CA}
  \country{USA}
}

\author{Yozen Liu}
\orcid{0000-0002-2107-504X}
\email{yliu2@snap.com}
\affiliation{%
  \institution{Snap Inc.}
  \city{Santa Monica}
  \state{CA}
  \country{USA}
}

\author{Tong Zhao}
\orcid{0000-0001-7660-1732}
\email{tzhao@snap.com}
\affiliation{%
  \institution{Snap Inc.}
  \city{Seattle}
  \state{WA}
  \country{USA}
}

\author{Neil Shah}
\orcid{0000-0003-3261-8430}
\email{nshah@snap.com}
\affiliation{%
  \institution{Snap Inc.}
  \city{Seattle}
  \state{WA}
  \country{USA}
}

\author{Evangelos E. Papalexakis}
\email{epapalex@cs.ucr.edu}
\orcid{0000-0002-3411-8483}
\affiliation{%
  \institution{University of California, Riverside}
  \city{Riverside}
  \state{CA}
  \country{USA}
}

\renewcommand{\shortauthors}{Shiao et al.}

\begin{abstract}
  Self-supervised learning on graphs has made large strides in achieving great performance in various downstream tasks. However, many state-of-the-art methods suffer from a number of impediments, which prevent them from realizing their full potential. For instance, contrastive methods typically require negative sampling, which is often computationally costly. While non-contrastive methods avoid this expensive step, most existing methods either rely on overly complex architectures or dataset-specific augmentations. In this paper, we ask: \textit{Can we borrow from classical unsupervised machine learning literature in order to overcome those obstacles?} 
Guided by our key insight that the goal of distance-based clustering closely resembles that of contrastive learning: both attempt to pull representations of similar items together and dissimilar items apart.
As a result, we propose \method{} --- a novel \textit{clustering-based} framework for graph representation learning that uses a loss inspired by Cluster Validation Indices (CVIs), i.e., 
internal measures of cluster quality (no ground truth required). \method{} is adaptable to different clustering methods and CVIs, and we show that with the right choice of clustering method and CVI, \method{} outperforms node classification baselines on 4/5 datasets with up to a \textbf{79\texttimes{}} training speedup compared to the best-performing baseline. \method{} also performs at par or better than baselines %
in node clustering and similarity search tasks, training up to \textbf{1,500\texttimes{}} faster than the best-performing baseline. Finally, we also provide theoretical foundations for the use of CVI-inspired losses in graph representation learning. 

\end{abstract}

\keywords{\paperKeywords{}}

\maketitle

\section{Introduction}
\label{sec:intro}
Graphs can be used to represent many different types of relational data, including  chemistry graphs~\cite{chen2019graph,guo2021few}, social networks~\cite{randomGraphs,coraCiteseer}, and traffic networks~\citep{derrow2021eta, tang2020knowing}. Graph Neural Networks (GNNs) have been effective in modeling graphs across a variety of tasks, such as for recommendation systems~\citep{pinSAGE,he2020lightgcn,sankar2021graph, tang2022friend,fan2022graph,zhao2022learning}, graph generation~\citep{graphrnn,fan2019labeled,shiao2021adversarially}, and node classification~\citep{grace,bgrl,seal,zhao2021data,jin2022empowering,han2022mlpinit}. These GNNs are traditionally~\citep{gcn} trained with a supervised loss, which requires labeled data that is often expensive to obtain in real-world scenarios. Graph self-supervised learning (SSL), a recent area of research, attempts to solve this by learning multi-task representations without labeled data~\cite{grace,bgrl,jin2021automated,gbt,ju2023multi}.

\begin{figure}[t]
    \centering
    \includegraphics[width=0.9\columnwidth]{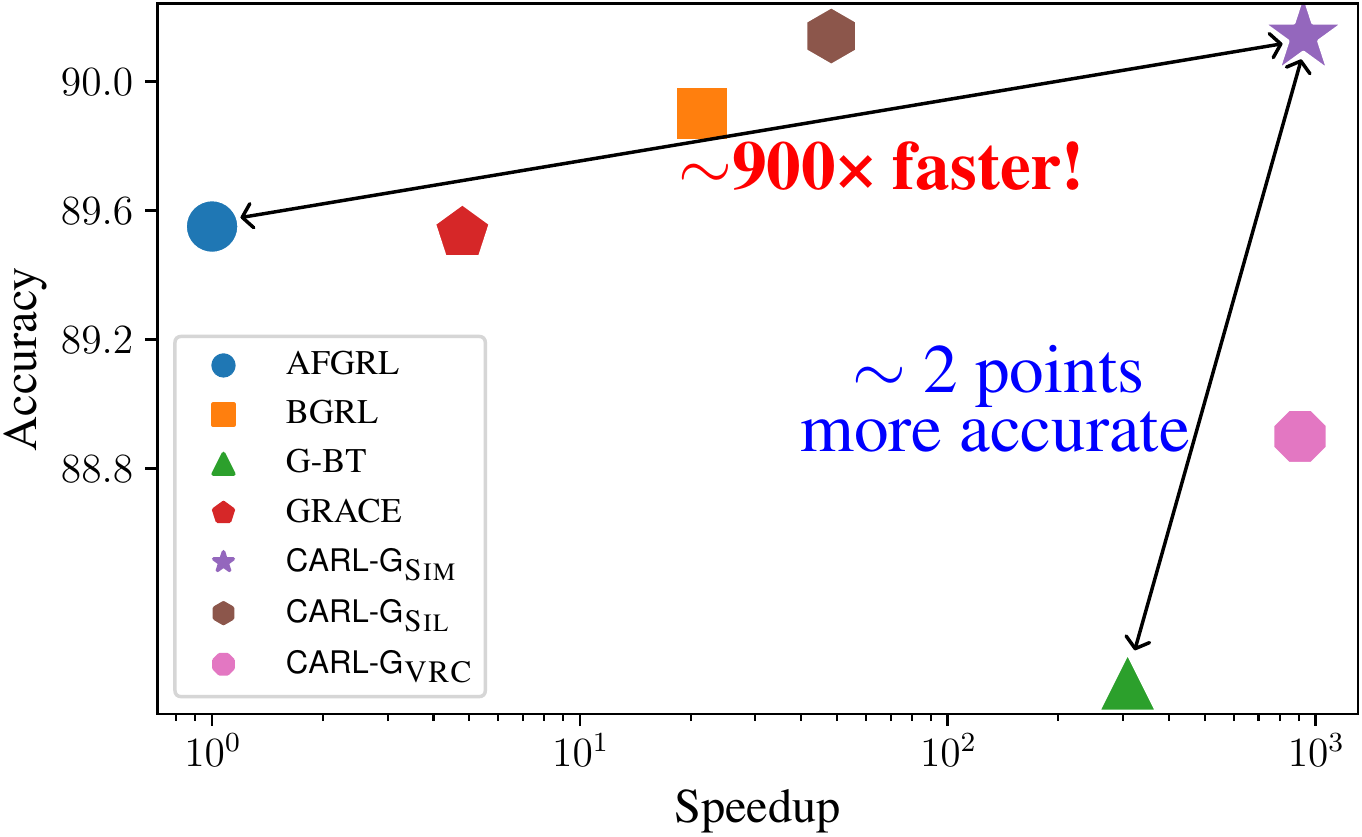}
    \caption{Comparison of our proposed methods with other baselines with respect to node classification accuracy and speedup on the \amazonphotos{} dataset. See \cref{fig:run_vs_acc} for results on the other datasets.}
    \label{fig:jewel}
    \Description{Image showing that our proposed method outperforms other baselines in terms of both accuracy and speed.}
\end{figure}

\begin{table*}[ht]
\centering
\resizebox{\linewidth}{!}{%
\begin{tabular}{r|c||c|c|c|c|c|}
& \multicolumn{1}{c||}{\textbf{Proposed}} & \multicolumn{5}{c|}{\textbf{Baseline Methods}} \\
\cline{2-7}
\textbf{} %
& \multicolumn{1}{c||}{\boldMethod{}*} %
& \multicolumn{1}{c|}{\textbf{AFGRL}~\cite{afgrl}} %
& \multicolumn{1}{c|}{\textbf{BGRL}~\cite{bgrl}} %
& \multicolumn{1}{c|}{\textbf{G-BT}~\cite{gbt}} %
& \multicolumn{1}{c|}{\textbf{GRACE}~\cite{grace}} %
& \multicolumn{1}{c|}{\textbf{MVGRL}~\cite{mvgrl}} \\ %
\hline
Avoids Negative Sampling       & \cmark & \cmark & \cmark & \cmark & \xmark & \xmark \\
Augmentation-Free              & \cmark & \cmark & \xmark & \xmark & \xmark & \xmark \\
Single Encoder                 & \cmark & \xmark & \xmark & \cmark & \xmark & \xmark \\
Single Forward Pass per Epoch  & \cmark & \xmark & \xmark & \xmark & \xmark & \xmark \\
\hline
\end{tabular}
}
\caption{Comparison of different self-supervised graph learning methods. *: We use \simpMethod{} as the representative method since it is the best-performing across all of the criteria.
}
\label{table:salesman}
\vspace{-0.25in}
\end{table*}

Most of these existing graph SSL methods can be grouped into either contrastive or non-contrastive SSL. Contrastive learning pulls the representations of similar (``positive'') samples together and pushes the representations of dissimilar (``negative'') samples apart. In the case of graphs, this often means either pulling the representations of a node and its neighbors together~\citep{graphsage} or pulling the representations of the same node across two different augmentations together~\cite{you2020graph}. Graph contrastive learning methods typically use non-neighbors as negative samples~\cite{grace,pinSAGE}, which can be costly.
Non-contrastive learning~\citep{tbgrl,bgrl,gbt,afgrl} avoids this step by only pulling positive samples together while employing strategies to avoid collapse.

However, all of these methods have some key limitations. %
Contrastive methods rely on a negative sampling step, which has an expensive quadratic runtime~\cite{bgrl} and requires careful tuning~\cite{yang2020understanding}. Non-contrastive methods often have complex architectures (ex. extra encoder with exponentially updated weights~\cite{bgrl,afgrl,selfgnn}) and/or rely heavily on augmentations~\cite{gbt,ccaSSG,bgrl,gda}. \citet{afgrl} shows that augmentations can change the semantics of underlying graphs, especially in the case of molecular graphs (where perturbing a single edge can create an invalid molecule). %

Upon further inspection, we observe that the behavior of contrastive and non-contrastive methods is somewhat similar to that of distance-based clustering~\cite{xu2015comprehensive}---both attempt to pull together similar nodes/samples and push apart dissimilar ones. The primary advantage of using clustering over negative sampling is that we can work directly in the smaller embedding space, preventing expensive negative sampling over the graph. Furthermore, there have been decades of research exploring the theoretical foundations of clustering methods and many different metrics have been proposed to evaluate the quality of clusters~\cite{silhouette,vrc,dbIndex,dunnIndex}. These metrics have been dubbed Cluster Validation Indices (CVIs)~\cite{arbelaitz2013extensive}. In this work, we ask the following question: \textit{Can we leverage well-established clustering methods and CVIs to create a flexible, fast, and effective GNN framework to learn node representations without any labels?}

It is worth emphasizing that our goal is \textit{not node clustering} directly---it is self-supervised graph representation learning. The goal is to develop a general framework that is capable of learning node embeddings for various tasks, including
node classification, node clustering, and node similarity search. 
There exists some similar work. DeepCluster~\cite{caron2018deep} trains a Convolutional Neural Network (CNN) with a supervised loss on pseudo-labels generated by a clustering method to learn image embeddings.

AFGRL~\cite{afgrl} uses clustering to select positive samples in lieu of augmentations for graph representation learning and applies the general BGRL~\cite{bgrl} framework to push those representations together. SCD~\cite{vskrlj2020embedding} searches over different hyperparameters to obtain a clustering where the silhouette score is maximized. However, to the best of our knowledge, there is no existing work in self-supervised representation learning that directly optimizes for CVIs, which, as we elaborate below, presents us with tremendous potential in advancing and accelerating the state of the art.

We fill this gap by proposing the novel idea of using Cluster Validation Indices (CVIs) directly as our loss function for a neural network. In conjunction with advances in clustering methods~\cite{schubert2019faster,minibatchKmeans,lenssen2022clustering,pelleg1999accelerating}, CVIs have been improved over the years as measures of cluster quality after performing clustering and have been shown for almost 5 decades to be effective for this purpose~\cite{silhouette,schubert2022stop,arbelaitz2013extensive,vrc}.

Our proposed method, \method{}, has several advantages over existing graph SSL methods by virtue of CVI-inspired losses. First, \method{} generally outperforms other graph SSL methods in node clustering, clustering, and node similarity tasks (see \cref{tab:classification,tab:similarity_search,tab:clustering}). 
Second, \method{} does not require the use of graph augmentations --- which are required by many existing graph contrastive and non-contrastive methods~\citep{grace,selfgnn,bgrl,afgrl,gbt,ccaSSG} and can inadvertently alter graph semantics~\cite{afgrl}. %
Third, \method{} has a relatively simple architecture compared to the dual encoder architecture of leading non-contrastive methods~\cite{bgrl,selfgnn,afgrl}, drastically reducing the memory cost of our framework. 
Fourth, we provide theoretical analysis that shows the equivalence of some CVI-based losses and a well-established (albeit expensive) contrastive loss. Finally, \method{} is sub-quadratic with respect to the size of the graph and much faster than the baselines, with up to a 79\texttimes{} speedup on \coauthorcs{} over BGRL~\cite{bgrl} (the best-performing baseline).

Our contributions can be summarized as follows:

\begin{itemize}[noitemsep,topsep=0em]
    \item We propose \method{}, the first (to the best of our knowledge) framework to use a Cluster Validation Index (CVI) as a neural network loss function.
    \item We propose 3 variants of \method{} based on different CVIs, each with its own advantages and drawbacks.
    \item We extensively evaluate \simpMethod{} --- the best all-around performer --- across 5 datasets and 3 downstream tasks, where it generally outperforms baselines. 
    \item We provide theoretical insight on \simpMethod{}'s success.
    \item We benchmark \simpMethod{} against 4 state-of-the-art models and show that it is up to 79 \texttimes{} faster with half the memory consumption (with the same encoder size) compared to the best-performing node classification baseline.
\end{itemize}%

\section{Preliminaries}
\label{sec:preliminaries}

\begin{figure*}[t]
    \centering
    \includegraphics[width=\textwidth]{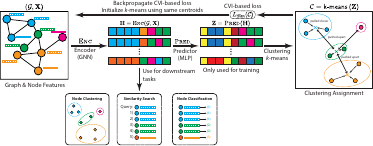}
    \caption{\methodname{} architecture diagram. We describe the method in detail in \cref{sec:method}.}
    \label{fig:arch}
    \Description{Our architecture diagram, which consists of an encoder, predictor, and clustering step. The embeddings are then used for downstream tasks.}
\end{figure*}

\paragraph{Notation} We denote a graph as $G = (\gV, \gE)$. $\gV$ is the set of $n$ nodes (i.e., $n = |\gV|$) and $\gE \subseteq \gV \times \gV$ is the set of edges. We denote the node-wise feature matrix as $\mX \in \R^{n \times f}$, where $f$ is the dimension of raw node features, and its $i$-th row $\vx_i$ is the feature vector for the $i$-th node. We denote the binary adjacency matrix as $\mA \in \{0, 1\}^{n \times n}$ and the learned node representations as $\mH \in \R^{n \times d}$, where $d$ is the size of latent dimension, and $\vh_i$ is the representation for the $i$-th node. Let $\neigh(u)$ be a function that returns the set of neighbors for a given node $u$ (i.e., $\neigh(u) = \{ v | (u,v) \in \gE \}$).%

Let $\gC$ be the set of clusters, $\gC_i \subseteq \gV$ be the set of nodes in the $i$-th cluster, and $c = |\gC|$ be the number of clusters. For ease of notation, let $\gU \in [1, c]^n$ be the set of cluster assignments, where $\gU_u$ is the cluster assignment for node $u$. Let $\gCent = \frac{1}{c} \sum_{u \in \gV} \vh_u$ be the global centroid and $\lCent{i} = \frac{1}{|\gC_i|} \sum_{u \in \gC_i} \vh_u$ be the centroid for the $i$-th cluster.%
\subsection{Graph Neural Networks}
\label{subsec:gnns}

A Graph Neural Network (GNN)~\citep{gcn,graphsage,zhang2020deep} typically performs message-passing along the edges of the graph. Each iteration of the GNN can be described as follows~\cite{gnn}:%
\vspace{-0.16em}
\begin{equation}
    \vh_u^{(k+1)} = \textsc{Update}^{(k)} \left(
        \vh_u^{(k)}, \textsc{Aggregate}^{(k)} (
            \{  \vh_v^{(k)}, \forall v \in \neigh (u) \}
        )
    \right) \thinspace ,
\end{equation}%
where \textsc{Update} and \textsc{Aggregate} are differentiable functions, and $\vh^{(0)}_u = \vx_u$. In this work, we opt for simplicity and use Graph Convolutional Networks (GCNs)~\citep{gcn} as the default GNN. These are GNNs where \textsc{Update} consists of a single MLP layer, and \textsc{Aggregate} is the mean of a node's representation with its neighbors. Formally, each iteration of the GCN can be written as:%
\begin{equation}
    \vh_u^{(k+1)} = \sigma \left(
        \mW^{(k+1)} \sum_{v\in \neigh(u) \cup \{ u \}} \frac{\vh_v}{\sqrt{| \neigh(u) | \cdot | \neigh(v) |}}
    \right) \thinspace.
\end{equation}

\subsection{Cluster Validation Indices}
\label{subsec:cvis}
Clustering is a class of unsupervised methods that aims to partition the input space into multiple groups, known as clusters. The goal of clustering is generally to maximize the similarity of points within each cluster while minimizing the similarity of points between clusters~\cite{xu2015comprehensive}. In this work, we focus on centroid-based clustering algorithms like $k$-means~\cite{kmeans} and $k$-medoids~\cite{kmedoids}.

Cluster Validation Indices (CVIs)~\cite{arbelaitz2013extensive} estimate the quality of a partition (i.e., clustering) by measuring the compactness and separation of the clusters without knowledge of the ground truth clustering. Note that these are different from metrics like Normalized Mutual Information (NMI)~\cite{nmi} or the Rand Index~\cite{rand}, which require ground truth information of cluster labels. Many different CVIs have been proposed over the years and extensively evaluated~\cite{arbelaitz2013extensive,schubert2022stop}.

\citet{arbelaitz2013extensive} extensively evaluated 30 different CVIs over a wide variety of datasets and found that the Silhouette~\cite{silhouette}, Davies-Bouldin*~\cite{dbIndex}, and Calinski-Harabasz (also known as the VRC: Variance Ratio Criterion)~\cite{vrc} indices perform best across 720 different synthetic datasets. The VRC has also been shown to be effective in determining the number of clusters for clustering methods~\cite{schubert2022stop,vrc,milligan1985examination,clusterAnalysisBook}. As such, we focus on the silhouette index (the best-performing CVI) and VRC (an effective CVI --- especially for choosing the number of clusters) in this work.

\subsubsection{Silhouette}
\label{subsec:silhouette}
The silhouette index computes the ratio of intra-cluster distance with respect to the inter-cluster distance of itself with its nearest neighboring cluster. It returns a value in [-1, 1], where a value closer to 1 signifies more desired and better distinguishable clustering. 
The silhouette index~\cite{silhouette} is defined as $\sil{} (\gC) = \frac{1}{n} \sum_{u \in \gV} s(u)$, where:%
\begin{align}
\label{eqn:silhouette}
s(u) = \frac{b(u) - a(u)}{\max \{ a(u), b(u) \}} \thinspace ,
\end{align}%
and%
\begin{align}
a(u) &= \frac{1}{| \gC_{\gU_u} | - 1} \sum_{v \in (\gC_{\gU_u} - \{ u\})} \tDist(\vh_u, \vh_v) \thinspace , \\
b(u) &= \min_{i \neq \gU_u} \frac{1}{|\gC_i|} \sum_{v \in \gC_i} \tDist(\vh_u, \vh_v) \thinspace .
\end{align}%
The runtime of computing the silhouette index for a given node is $O(n)$, which can be expensive if calculated over all nodes. We discuss this issue and a modified solution later in \cref{subsec:training}.
\subsubsection{Variance Ratio Criterion} The VRC~\cite{vrc} computes a ratio between its intra-cluster variance and its inter-cluster variance. Its intra-cluster variance is based on the distances of each point to its centroid, and its inter-cluster variance is based on the distance from each cluster centroid to the global centroid. Formally,
\begin{equation}
\vrc{}(\gC) = \frac{n - c}{c - 1} \frac{\sum_{\gC_k \in \gC} |\gC_k| \tDist{}(\lCent{k}, \gCent{})}{ \sum_{\gC_k \in \gC} \sum_{u \in \gC_k} \tDist{}(\vh_u, \gCent{}) } \thinspace .
\end{equation}%
For the purposes of this paper, we use Euclidean distance, i.e., $\tDist(\va, \vb) = \left\Vert \va - \vb \right\Vert_2$.

\section{Proposed Method}
\label{sec:method}

\subsubsection{Problem Formulation} Given a graph $G$ and its node-wise feature matrix $\mX \in \R^{n \times f}$, learn node embeddings $\vh_u \in \R^d$ for each node $u \in \gV$ without any additional information (e.g. node class labels). 
The learned embeddings should be suitable for various downstream tasks, such as node classification and node clustering.

\subsubsection{\method{}} We propose \method{}, which consists of three main steps (\cref{fig:arch}). First, a GNN encoder $\textsc{Enc}(\cdot)$ takes the graph as input and produce node embeddings $\mH = \textsc{Enc}(\mX, \mA)$. Next, a multi-layer perceptron (MLP) predictor network $\textsc{Pred}(\cdot)$ takes the embeddings by GNN and produces a second set of node embeddings $\mZ = \textsc{Pred}(\mH)$. We then perform a clustering algorithm (e.g., $k$-means) on $\mZ$ to produce a set of clusterings $\gC$. It is worth noting that the clustering algorithm does not have to be differentiable. Finally, we compute a cluster validation index (CVI) on the cluster assignments and backpropagate to update the encoder's and predictor's parameters. After training, only the GNN encoder $\textsc{Enc}(\cdot)$ and its produced embeddings $\mH$ are used to perform downstream tasks, and the predictor network $\textsc{Pred}(\cdot)$ is discarded (similar to the prediction heads in non-contrastive learning work~\cite{bgrl,afgrl,selfgnn}).

\subsection{Training \method{}}
\label{subsec:training}
As aforementioned in \cref{subsec:cvis}, we evaluate the silhouette index~\cite{silhouette} and the VRC~\cite{vrc} as learning objectives. In order to use them effectively, we must make slight modifications to the loss functions. First, we must negate the functions since a higher score is better for both CVIs, and we typically want to minimize a loss function. Second, while $\sil(\gC) = 1$ and $\vrc(\gC) \rightarrow \infty$ are theoretically ideal, we find this is generally not true in practice. This is because the clustering method may miscluster some nodes and fully maximizing the CVIs will push the misclustered representations too close together, negatively impacting a classifier's ability to distinguish them. To bound the maximum values of our loss, we add $\silTar$ and $\vrcTar$ --- the target silhouette and VRC indices, respectively.
The silhouette-based and VRC-based losses are then defined as follows:%
\begin{align}
\label{eqn:losses}
    L_{\sil{}} = \left| \silTar - \sil{}(\gC) \right| \thinspace , \hspace{2.5em}
    L_{\svrc{}} = \left| \vrcTar - \vrc{}(\gC) \right| \thinspace ,
\end{align}%
where $\silTar \in [-1, 1]$ is the target silhouette index and $\vrcTar \in [0, \infty)$ is the target VRC. 

Upon careful inspection of \cref{eqn:silhouette}, we can observe that the computational complexity for the silhouette is $O(n^2)$, while the complexity of VRC is only $O(n c)$, where $c \ll n$. This is a critical weakness in using the silhouette, especially when the goal is to avoid a quadratic runtime (the typical drawback of contrastive methods). Backpropagating on this loss function would also result in quadratic memory usage because we have to store the gradients for each operation. To resolve this issue, we leverage the simplified silhouette~\cite{hruschka2004evolutionary}, which instead uses the centroid distance. The simplified silhouette has been shown to have competitive performance with the original silhouette~\cite{wang2017analysis} while being much faster --- running in $O(n c)$ time. As such, we also try the simplified silhouette, which can be written as:%
\begin{equation}
\label{eqn:simp_sil}
    s'(u) = \frac{b'(u) - a'(u)}{\max \{ a'(u), b'(u) \} } \thinspace,
\end{equation}%
where $i = \pcAssign{}_u$ is the cluster assignment for $u$ and
\begin{equation}
    a'(u) = \tDist{} (u, \lCent{i}) \thinspace, \hspace{2.5em}
    b'(u) = \min_{\gC_k \neq \gC_i} \tDist(u, \lCent{i}) \thinspace,
\end{equation}%
We use the same loss function as $L_{\sil{}}$ (\cref{eqn:losses}), simply substituting $s'(u)$ for $s(u)$ (see \cref{subsec:silhouette}), and name it $L_{\simp{}}$.

\subsection{Clustering Method}
\label{subsec:clustering_method}

\paragraph{\kmeans} We primarily focus on \kmeans{} clustering for this framework due to its fast linear runtime (although we do briefly explore using $k$-medoids in \cref{subsec:kmed_vs_kmeans} below). The goal of \kmeans{} is to minimize the sum of squared errors---also known as the inertia or within-cluster sum of squares. Formally, this can be written as:%
\begin{equation}
    \argmin_\gC \sum_{i=1}^c \sum_{x \in \gC_i} \tDist{} (\vx, \lCent{i}) \thinspace.
\end{equation}
Finding the optimal solution to this problem is NP-hard~\cite{dasgupta2008hardness}, but efficient approximation algorithms~\cite{minibatchKmeans,lloyds} have been developed that return an approximate solution in linear time (see \cref{subsec:fast_kmeans}). While \kmeans{} is fast, it is known to be heavily dependent on its initial centroid locations~\cite{bradley1998refining,kmeanspp}, which can be partially solved via repeated re-initialization and picking the clustering that minimizes the inertia.

Poor initialization is typically not a large issue in \kmeans{} use cases since the end goal is usually to compute a single clustering so we can simply repeat and re-initialize until we are satisfied. However, since we generate a new clustering once per epoch in \method{}, poor initialization can result in a large amount of variance between epochs.

To minimize the chance of poor centroid initialization occurring during training, we carry the cluster centroids over between epochs. The centroids will naturally update after running \kmeans{} since the embeddings $\mZ$ changes each epoch (after backpropagation with CVI-based loss).

\subsection{Theoretical Analysis}
\label{subsec:ml_equiv}
To gain a theoretical understanding of why our framework works, we compare it to Margin loss --- a fundamental contrastive loss function that has been shown to work well for self-supervised representation learning~\cite{graphsage,pinSAGE}. 
We show that CVI-based loss (especially silhouette loss) has some similarity to Margin based loss, which intuitively explains the success of CVI-based loss. In addition, we show that CVI-based loss has the advantages of (a) lower sensitivity to graph structure, and (b) no negative sampling required.

\subsubsection{Similarity analysis of CVI-based loss and Margin loss}
Both Margin loss and CVI-based loss fundamentally consist of two terms: one measuring the distance between neighbors/inter-cluster points and one measuring the distance between non-neighbors/inter-cluster points. This similarity allows us to analyze basic versions of our proposed silhouette loss and margin loss in the context of node classification and show that they are identical in both of their ideal cases. 
We further analyze the sensitivity of these losses with respect to various parameters of the graph to examine the advantages and disadvantages of our proposed method. To do this, we first define the mean silhouette and margin loss functions:
\begin{definition}[Mean Silhouette]
\label{def:simp_sil}
We define the mean silhouette loss (removing the hyperparameter $\tau_{\sil}$, focusing on the numerator (un-normalizing the index) and replacing $\min$ with the mean) as follows:%
\begin{equation}
L_{\meanSil}(u) = -(b_{\meanSil}(u) - a(u)) = a(u) - b_\meanSil{}(u) \thinspace,
\end{equation}%
where
\begin{equation}%
b_\meanSil(u) = \frac{1}{c-1} \sum_{j \neq i} \frac{1}{|\gC_j|}\sum_{v \in \gC_j} \tDist (\vh_u, \vh_v) \thinspace .
\end{equation}%
\end{definition}%
\begin{definition}[Margin Loss]
\label{def:margin}
We define margin loss as follows:
\begin{align}
\textsc{ml}(u) = &\frac{1}{|\neigh(u)|} \sum_{v \in \neigh(u)} \tDist(\vh_u,\vh_v) \\
&- \frac{1}{|\gV - \neigh(u) - \{ u \}|}\sum_{t \not \in \neigh(u)} \tDist(\vh_u,\vh_t) \thinspace . \nonumber
\end{align}%
\end{definition}%
It is worth noting that this margin loss differs from the max-margin loss traditionally used in graph SSL~\cite{pinSAGE}. We simplify it above in \cref{def:margin} by removing the $\max$ function for ease of analysis.

Let $\gL$ be the set of true class labels, and $\gL_u$ be the class label for a node $u \in \gV$.
We define the expected inter-class and intra-class distances as follows:%
    \begin{equation}
    \E \left[ \tDist(\vh_u, \vh_v) \right] =
        \begin{cases}
            \alpha, & \text{if } \gL_u = \gL_v \thinspace ; \\
            \beta, & \text{otherwise}, \\
        \end{cases}
    \end{equation}%
where $\alpha,\beta \in \R^+$. Next, let%
    \begin{equation}
    P\left( (u,v) \in \gE \right) =
        \begin{cases}
            p, & \text{if } \gL_u = \gL_v \thinspace ; \\
            q, & \text{otherwise}, \\
        \end{cases}%
    \end{equation}
i.e., $\gG$ follows a stochastic block model with a probability matrix $P \in [0, 1]^{c \times c}$ of the form:%
\begin{equation}
\label{eqn:sbm_matrix}
    P = \begin{bNiceMatrix}
        p       & q      &  q       & q \\
        q       & \Ddots &  \Ddots  & \Ddots \\
        q       & \Ddots &  \Ddots  & \Ddots \\
        q       & \Ddots &  \Ddots  & p \\
    \end{bNiceMatrix} \thinspace .
\end{equation}
Note that $q$ does not necessarily equal $1-p$. Finally, we define the inter-class clustering error rate $\epsilon$ and intra-class clustering error rate $\delta$ as follows:%
    \begin{align}
    P (\gC_u \neq \gC_v | \gL_u = \gL_v) &= \epsilon \thinspace ; \\
    P (\gC_u = \gC_v | \gL_u \neq \gL_v) &= \delta \thinspace .
    \end{align}%
\begin{claim}
Given the above assumptions, the expected value of the simplified silhouette loss approaches that of the margin loss as $p \rightarrow 1, q \rightarrow 0$, and $\epsilon, \delta \rightarrow 0$.
\end{claim}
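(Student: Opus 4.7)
The plan is to compute the expected values $\E[L_{\meanSil}(u)]$ and $\E[\textsc{ml}(u)]$ under the stated model and verify that both limits coincide at $\alpha - \beta$. In each case I would condition on whether a pair $(u,v)$ lies in the same cluster or shares an edge, invert via Bayes' rule to a posterior over labels, and then apply the two-valued distance distribution.

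I would start with the mean silhouette loss. Assuming balanced classes (each of the $c$ classes equally likely, which is consistent with the symmetric matrix in \eqref{eqn:sbm_matrix}), a direct Bayes computation gives
\begin{align*}
P(\gL_v = \gL_u \mid \gC_v = \gC_u) &= \frac{1-\epsilon}{(1-\epsilon) + (c-1)\delta}, \\
P(\gL_v = \gL_u \mid \gC_v \neq \gC_u) &= \frac{\epsilon}{\epsilon + (c-1)(1-\delta)},
\end{align*}
which tend to $1$ and $0$ respectively as $\epsilon, \delta \to 0$. Combining with $\E[\tDist(\vh_u,\vh_v) \mid \gL_u = \gL_v] = \alpha$ and $\E[\tDist(\vh_u,\vh_v) \mid \gL_u \neq \gL_v] = \beta$ via the law of total expectation, I obtain $\E[a(u)] \to \alpha$ and $\E[b_\meanSil(u)] \to \beta$, hence $\E[L_{\meanSil}(u)] \to \alpha - \beta$. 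The exact same reasoning applies to the margin loss, using the SBM posteriors
\begin{align*}
P(\gL_v = \gL_u \mid (u,v) \in \gE) &= \frac{p}{p + (c-1)q}, \\
P(\gL_v = \gL_u \mid (u,v) \notin \gE) &= \frac{1-p}{(1-p) + (c-1)(1-q)},
\end{align*}
which tend to $1$ and $0$ as $p \to 1, q \to 0$, giving $\E[\textsc{ml}(u)] \to \alpha - \beta$ and matching the silhouette limit.

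The main subtlety is that $a(u)$, $b_\meanSil(u)$, and the two margin-loss terms are sample averages over the \emph{random} cluster sets $\gC_j$ and the random neighbor set $\neigh(u)$, so the outer expectation couples the label randomness with the clustering/adjacency randomness. I would handle this by conditioning first on the cluster sizes (resp., node degrees), applying linearity within each cluster or neighborhood, and then leveraging the fact that the embedding distance distribution depends only on class labels, so it factors cleanly out of the clustering and adjacency randomness. I would also need nonzero cluster sizes and positive degree so the denominators are well-defined, and a mild boundedness condition on distances to justify interchanging expectation and the joint limit; both hold in the regime specified by the claim under standard conditions (e.g., linear-sized clusters and non-vanishing base degree).
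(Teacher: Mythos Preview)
Your approach is correct and follows the same high-level strategy as the paper---compute the expectation of each term in $L_{\meanSil}(u)$ and $\textsc{ml}(u)$ by conditioning on label agreement, then show the two limits coincide. The difference lies in how the conditioning is carried out. You invert via Bayes to obtain the posteriors $P(\gL_u=\gL_v\mid\gC_u=\gC_v)$ and $P(\gL_u=\gL_v\mid(u,v)\in\gE)$, which is the natural way to evaluate $\E[\tDist(\vh_u,\vh_v)\mid\gC_u=\gC_v]$ and its edge analogue; this yields the common limit $\alpha-\beta$. The paper instead weights by the \emph{joint} probabilities $P(\gL_u=\gL_v)\cdot P(\gC_u=\gC_v\mid\gL_u=\gL_v)$ (and symmetrically on the margin side) without dividing through by $P(\gC_u=\gC_v)$, in effect computing $\E[\tDist\cdot\mathbf{1}\{\gC_u=\gC_v\}]$ rather than the conditional expectation, and arrives at the common limit $\tfrac{\alpha}{c}+\tfrac{\beta}{c}-\beta$. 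Both routes establish the claim because the same convention is applied symmetrically to the silhouette and margin terms, so equality of the two limits survives either way; your Bayes-based derivation tracks the normalized sample-average definitions of $a(u)$, $b_\meanSil(u)$, and the margin terms more faithfully and produces the cleaner limiting value.
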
%
\begin{proof}
To find $\Ebr{L_s(u)}$, we first find $\Ebr{a(u)}$ and $\Ebr{b_s(u)}$:
\begin{align}
    \Ebr{a(u)} &= \frac{\alpha}{c} - \frac{\epsilon \alpha}{c} + \delta \beta - \frac{\delta \beta}{c} \thinspace; \\
    \Ebr{b_s(u)} &= \frac{\epsilon \alpha}{c} + \beta - \beta\delta - \frac{\beta}{c} + \frac{\delta \beta}{c} \thinspace; \\
    \Ebr{L_s(u)} &= \Ebr{a(u)} - \Ebr{b_s(u)} \thinspace \\
    &= - \frac{2 \epsilon \alpha}{c} - \frac{2 \delta \beta}{c} + \frac{\beta}{c} + \frac{\alpha}{c} - \beta + 2 \delta \beta \thinspace . \nonumber
\end{align}%
Next, we take its limit as $\epsilon,\delta \rightarrow 0$:
\begin{align}
\lim_{\epsilon,\delta \rightarrow 0} \left( - \frac{2 \epsilon \alpha}{c} - \frac{2 \delta \beta}{c} + \frac{\beta}{c} + \frac{\alpha}{c} - \beta + 2 \delta \beta \right) = \frac{\beta}{c} + \frac{\alpha}{c} - \beta \thinspace .
\end{align}

To find $\Ebr{\textsc{ml}(u)}$, we first find the expected value of its left and right sides:
\begin{align}
    \Ebr{\frac{1}{\neigh(u)} \sum_{v \in \neigh(u)} \tDist(\vh_u,\vh_v)} = \frac{\alpha p}{c} + \beta q - \frac{\beta q}{c} \thinspace; \\
    \Ebr{\frac{1}{|\gV - \neigh(u) - \{ u \}|}\sum_{t \not \in \neigh(u)} \tDist(\vh_u,\vh_t)} \\
    = \frac{\alpha}{c} - \frac{\alpha p}{c} + \beta - \beta q - \frac{\beta}{c} + \frac{\beta q}{c} \thinspace.
\end{align}%
Substituting them back in, we get
\begin{align}
    \Ebr{\textsc{ml}(u)} = \frac{2\alpha p}{c} + 2\beta q - \frac{2\beta q}{c} - \frac{\alpha}{c} + \frac{\beta}{c} - \beta \thinspace .
\end{align}
Taking its limit as $p \rightarrow 1, q \rightarrow 0$, we find
\begin{align}
\lim_{p \rightarrow 1, q \rightarrow 0} \left( \frac{2\alpha p}{c} + 2\beta q - \frac{2\beta q}{c} - \frac{\alpha}{c} + \frac{\beta}{c} - \beta \right) \\
= \frac{2 \alpha}{c} - \frac{\alpha}{c} + \frac{\beta}{c} - \beta = \frac{\alpha}{c} + \frac{\beta}{c} - \beta  \thinspace .\\
\therefore \lim_{p \rightarrow 1, q \rightarrow 0} \Ebr{\textsc{ml}(u)} = \lim_{\epsilon,\delta \rightarrow 0} \Ebr{L_s(u)} \thinspace .
\end{align}%
\end{proof}
\arxivOnly{The full derivation can be found in \cref{proof:full_sil}. }Since the two loss functions are identical in their ideal cases, one may wonder: \textit{Why not use margin loss instead?} Well, the silhouette-based loss has two key advantages: 

\subsubsection{Lower sensitivity to graph structure.} The margin loss is minimized as $p \rightarrow 1$ and $q \rightarrow 0$. However, $p$ and $q$ are attributes of the graph itself, making it difficult for a user to directly improve the performance of a model using that loss function. On the other hand, the mean silhouette depends on $\epsilon$ and $\delta$, the inter/intra-class clustering error rates, instead. Even on the same graph, a silhouette-based loss can likely be improved by either choosing a more suitable clustering method or distance metric. This greatly increases the flexibility of this loss function.

\subsubsection{No negative sampling.} Negative sampling is required for most graph contrastive methods and often requires either many samples~\cite{mvgrl} or carefully chosen samples~\cite{Ying2018HierarchicalPoolingb,understanding_neg_sampling}. This is costly, often costing quadratic time~\cite{bgrl}. The primary advantage of non-contrastive methods is that they avoid this step~\cite{gbt,bgrl}. The simplified silhouette avoids this issue by only working in the $n \times d$ embedding space instead of the $n \times n$ graph. It also contrasts node representations against centroid representations instead of against other nodes directly.

\section{Experimental Evaluation}
\label{sec:experiments}

We evaluate 3 variants of \method{}: (a) \textbf{\silMethod{}} --- based on the silhouette loss in \cref{eqn:losses}, (b) \textbf{\vrcMethod{}} --- based on the VRC loss in \cref{eqn:losses}, and (c) \textbf{\simpMethod{}} --- based on the simplified silhouette loss in \cref{eqn:simp_sil}. We evaluate these variants on 5 datasets on node classification and thoroughly benchmark their memory usage and runtime. We then select the best-performing variant, \simpMethod{}, and evaluate its performance across 2 additional tasks: node clustering, and embedding similarity search.

\subsubsection{Node Classification}
\label{desc:eval_classification}
A common task for GNNs is to classify each node into one of several different classes. In the supervised setting, this is often explicitly optimized for during the training process since the GNN is typically trained with cross-entropy loss over the labels~\citep{gcn,graphsage} but this is not possible for graph SSL methods where we do not have the labels during the training of the GNN. As such, the convention~\cite{dgi,grace,gbt,bgrl,afgrl} is to train a logistic regression classifier on the frozen embeddings produced by the GNN.

Following previous works, we train a logistic regression model with $\ell_2$ regularization on the frozen embeddings produced by our encoder model. We compare against a variety of self-supervised baselines, including both GNN-based and non-GNN-based: DeepWalk~\cite{deepWalk14}, RandomInit~\cite{dgi}, DGI~\cite{dgi}, GMI~\cite{gmi}, MVGRL~\cite{mvgrl}, GRACE~\cite{grace}, G-BT~\cite{gbt}, AFGRL~\cite{afgrl}, and BGRL~\cite{bgrl}. We also evaluate our method against two supervised models: GCA~\cite{gca} and a GCN~\cite{gcn}. We follow \cite{bgrl,afgrl} and use an 80/10/10 train/validation/test, early stopping on the validation accuracy. We re-run AFGRL and BGRL using their published code and weights (where possible) on that split%
\footnote{The official \href{https://github.com/nerdslab/bgrl/blob/dec99f8c605e3c4ae2ece57f3fa1d41f350d11a9/bgrl/logistic\_regression\_eval.py\#L9}{BGRL implementation online} uses an 80/0/20 split compared to the 80/10/10 split mentioned in \cite{bgrl}, so we re-run their trained models on an 80/10/10 split. Most results are similar but we get slightly different results on \amazoncomputers{}{}.}%
. Finally, we use node2vec results from \cite{afgrl} and the reported results of the other baseline methods from their respective papers. See \cref{subsec:impl_details} for implementation details.

\subsubsection{Node Clustering}
\label{desc:node_clustering}
Following previous graph representation learning work~\cite{mvgrl,gala,afgrl}, we also evaluate \method{} on the task of node clustering. We fit a $k$-means model on the generated embeddings $\mH$ using the evaluation criteria from \cite{afgrl} --- NMI and cluster homogeneity. Following \cite{afgrl}, we re-run our model with different hyperparameters (the embeddings are not the same as node classification) and report the highest clustering scores. Due to computational resource constraints, we choose to only evaluate \simpMethod{}, the overall best-performing model. We report the scores of the baselines models from \cite{afgrl}.

\subsubsection{Similarity Search}
\label{desc:sim_search}
Following \cite{afgrl}, we evaluate our model on node similarity search. The goal of similarity search is to, given a query node $u$, return the $k$ nearest neighbors. In our setting, the goal is to return other nodes belonging to the same class as the query node. We evaluate the performance of each method by computing its Hits@$k$ --- the percentage of the top $k$ neighbors that belong to the same class. Similar to \cite{afgrl}, we evaluate our model every epoch and report the highest similarity search scores.  We use $k \in \{5,10\}$ and use the scores from \cite{afgrl} as baseline results.

\subsection{Evaluation Results}
\label{subsec:eval_results}

\subsubsection{Node Classification Performance} We show the node classification accuracy of our three proposed methods along with the baseline results in \cref{tab:classification}. \silMethod{} generally performs the best of all the evaluated methods, with the highest accuracy on 4/5 of the datasets (all except \wikics{}). \simpMethod{} generally performs similarly to \silMethod{}, with similar performance on all datasets except \wikics{} and \amazonphotos{}, and still outperforms the baselines on 4/5 datasets. \vrcMethod{} is the weakest-performing method of the three methods. It only outperforms baselines on 2/5 datasets. Since \simpMethod{}{} is much faster than \silMethod{} (see \cref{subsec:training,fig:runtime}) without sacrificing much performance, we focus on \simpMethod{} for the remainder of the evaluation tasks.

\subsubsection{Node Clustering Performance} We evaluate \simpMethod{} on node clustering and display the results in \cref{tab:clustering}. We find that it generally outperforms its baselines on 5/5 datasets in terms of NMI and 4/5 datasets in terms of homogeneity. \simpMethod{} and AFGRL~\cite{afgrl} both encourage a clusterable representations by utilizing \kmeans{} clustering as part of their respective training pipelines. 

\begin{table}[t]
    \centering
    \small
    \scale[0.97]{\begin{tabular}{c|c|ccc|cc}
        \toprule
    \multicolumn{2}{c|}{} & GRACE & GCA & BGRL & AFGRL & \simpMethod{} \\
        \midrule
        \multirow{2}{*}{\texttt{WikiCS}}     & NMI  & 0.428 & 0.337 & 0.397          & 0.413          & \textbf{0.471} \\ 
                                    & Hom. & 0.442 & 0.353 & 0.416          & 0.430          & \textbf{0.491} \\
        \midrule
        \multirow{2}{*}{\texttt{Computers}}  & NMI  & 0.479          & 0.528 & 0.536          & 0.552 & \textbf{0.558} \\ 
                                    & Hom. & 0.522          & 0.582 & 0.587          & 0.604 & \textbf{0.607} \\
        \midrule
        \multirow{2}{*}{\texttt{Photo}}      & NMI  & 0.651          & 0.644 & 0.684 & 0.656          & \textbf{0.701} \\ 
                                    & Hom. & 0.666          & 0.658 & 0.700 & 0.674          & \textbf{0.718} \\
        \midrule
        \multirow{2}{*}{\texttt{Co.CS}}      & NMI  & 0.756          & 0.762 & 0.773          & 0.786 & \textbf{0.790} \\
                                    & Hom. & 0.791          & 0.797 & 0.804          & \textbf{0.816} & 0.815 \\
        \midrule
        \multirow{2}{*}{\texttt{Co.Physics}} & NMI  & OOM            & OOM   & 0.557          & 0.729 & \textbf{0.771} \\
                                    & Hom. & OOM            & OOM   & 0.602          & 0.735 & \textbf{0.776} \\
        \bottomrule
    \end{tabular}}
    \caption{Node clustering performance in terms of cluster NMI and homogeneity. \simpMethod{} outperforms the baselines on 4/5 datasets.}
    \label{tab:clustering}
    \vspace{-0.3in}
\end{table}

\subsubsection{Similarity Search Performance} We evaluate \simpMethod{} on similarity search in \cref{tab:similarity_search}, where it roughly performs on par with AFGRL, the best-performing baseline. This is surprising, as AFGRL specifically optimizes for the similarity search task by using $k$-NN as one of the criteria to sample neighbors.

\begin{figure*}
\centering
    \subcaptionbox{%
        \amazoncomputers{}%
    }[.245\textwidth]
    {\includegraphics[width=0.245\textwidth]{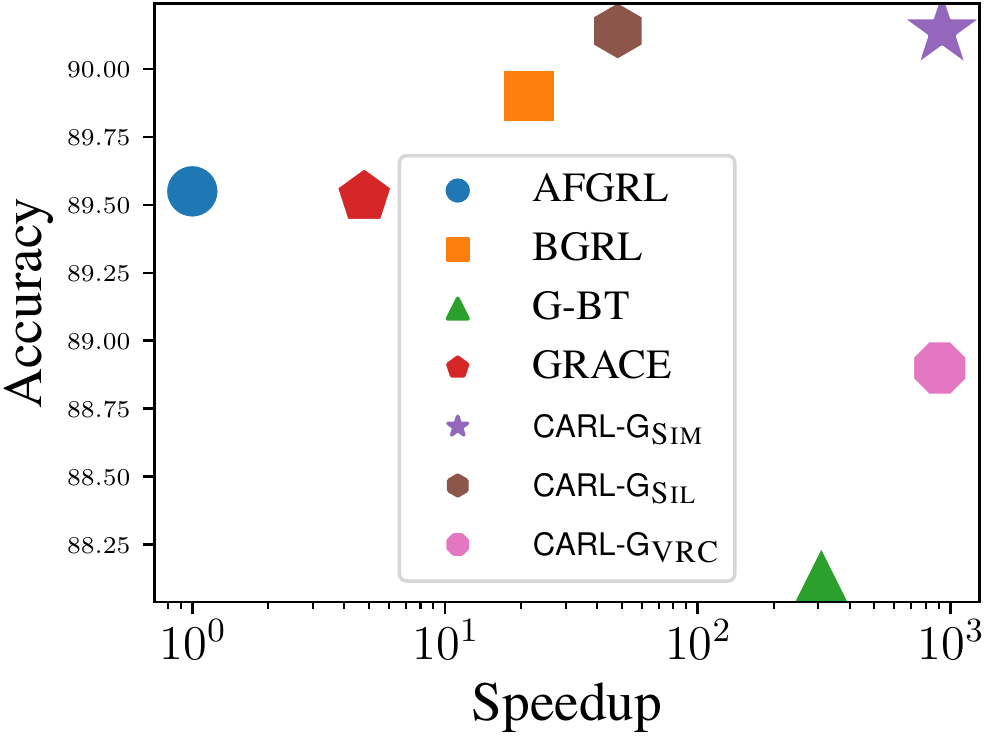}}%
    \subcaptionbox{%
        \coauthorcs{}%
    }[.245\textwidth]%
    {\includegraphics[width=0.245\textwidth]{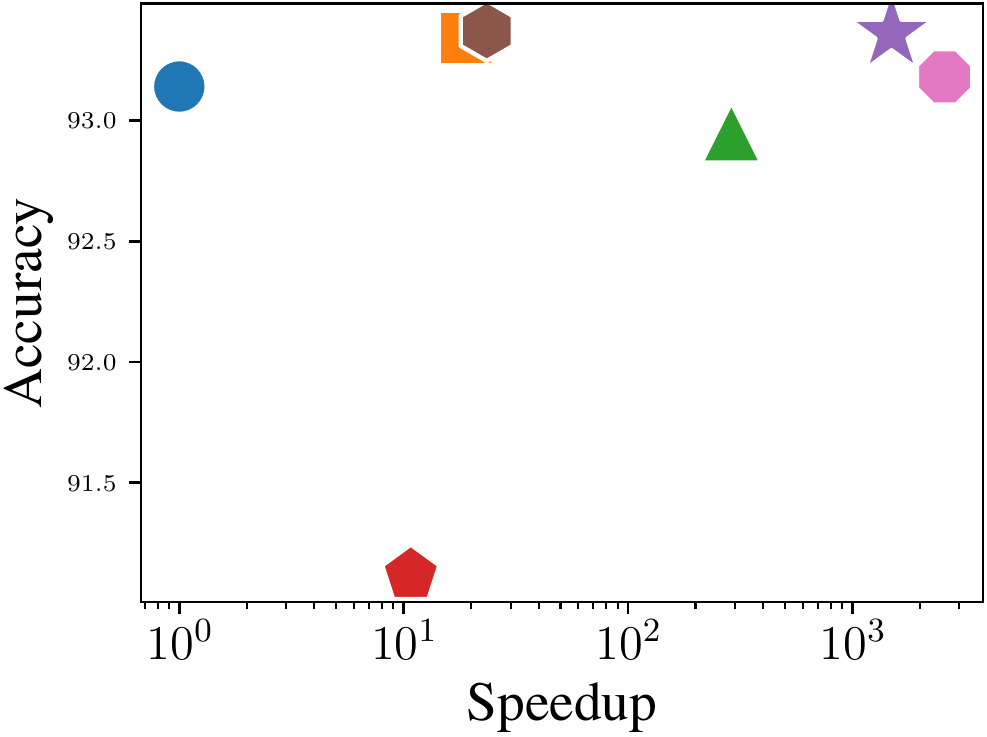}}%
        \subcaptionbox{%
        \coauthorphysics{}%
    }[.245\textwidth]%
    {\includegraphics[width=0.245\textwidth]{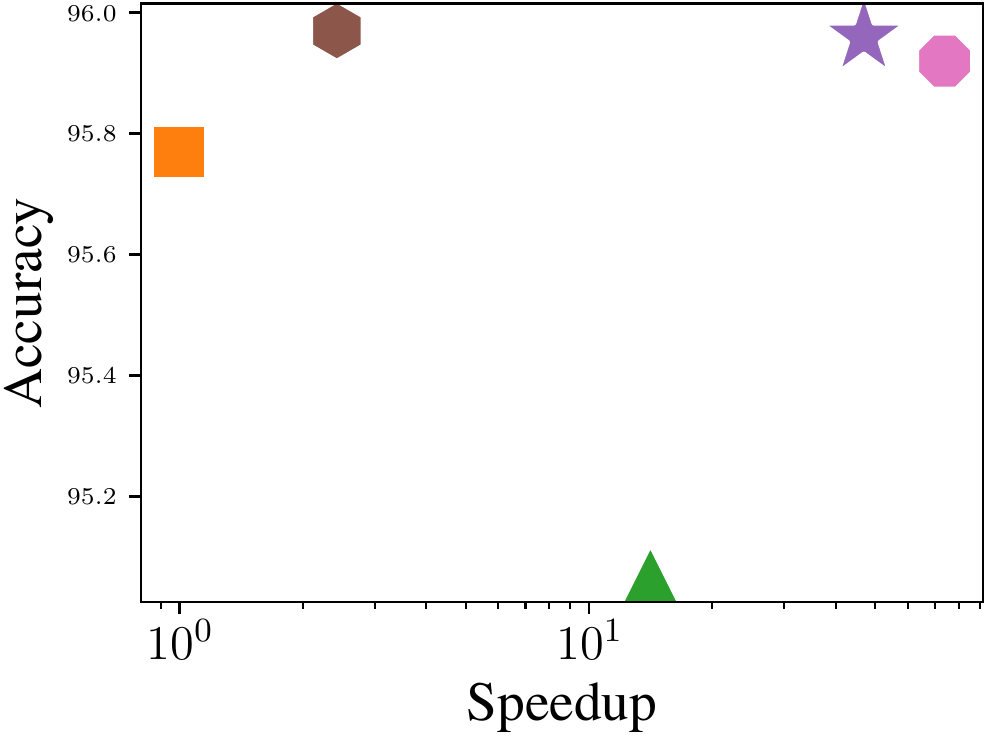}}%
        \subcaptionbox{%
        \wikics{}%
    }[.245\textwidth]%
    {\includegraphics[width=0.245\textwidth]{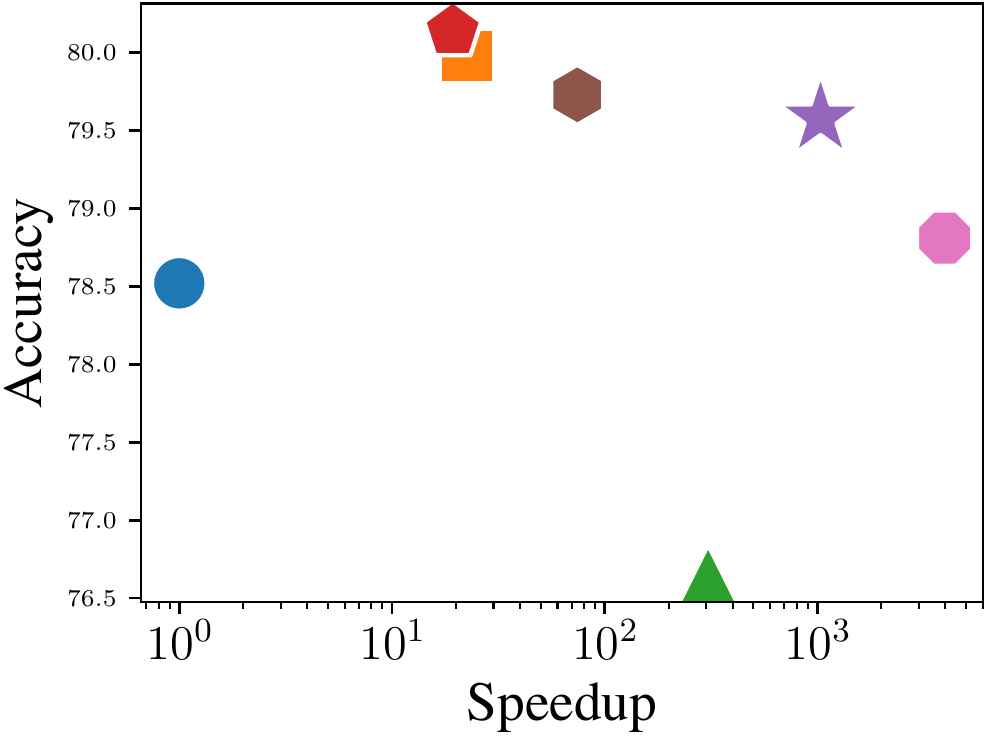}}%
    \caption{Runtime v.s. accuracy plots. \simpMethod{}, \silMethod{}, and \vrcMethod{} are our proposed methods. Speedup is relative to the slowest baseline (AFGRL). AFGRL and GRACE run out of memory on \coauthorphysics{}.}
    \label{fig:run_vs_acc}
    \Description{Plots showing the tradeoff between accuracy and speed for different proposed methods on different datasets. Generally, the simplified-silhouette-based CARL-G performs the best.}
    \vspace{-0.1in}
\end{figure*}

\subsection{Resource Benchmarking}
\label{subsec:benchmarking}
 We benchmark the 3 variants of our proposed method against BGRL~\cite{bgrl} (the best-performing baseline), AFGRL~\cite{afgrl} (the most recent baseline), G-BT~\cite{gbt} (the fastest baseline), and GRACE~\cite{grace} (a strong contrastive baseline). We time the amount of time it takes to train each of the best-performing node classification models. We remove all evaluation code and purely measure the amount of time it takes to train each method, taking care to synchronize all asynchronous GPU operations. We use the default values in the respective papers for AFGRL and BGRL: 5,000 epochs for AFGRL and 10,000 epochs for BGRL. We use 50 epochs for \method{}, although our method converges much faster in practice.%

We also measure the GPU memory usage of each method. 
We use the hyperparameters by the respective paper authors for each dataset, which is why the methods use different encoder sizes. Note that the encoder sizes greatly affect the runtime and memory usage of each of the models, so we report the layer sizes used in \cref{tbl:encoder_sizes}. Our benchmarking results can be found in \cref{fig:gpu_mem,fig:runtime}.

\begin{table}[!htp]
\centering
\resizebox{\columnwidth}{!}{
\begin{tabular}{lccccc}
\toprule
                & \texttt{Computers}    &       \texttt{Photos} &   \texttt{Co-CS}   &      \texttt{Co-Phy}      &    \texttt{Wiki} \\
\midrule
AFGRL           &            [512] &         [512] &      [1024] &              OOM &     [1024] \\
BGRL            &        [256,128] &     [256,128] &   [512,256] &        [256,128] &  [512,256] \\
G-BT            &        [256,128] &     [512,256] &   [512,256] &        [256,128] &  [512,256] \\
GRACE           &        [256,128] &     [256,128] &   [512,256] &        [256,128] &  [512,256] \\
\simpMethod{}   &        [512,256] &     [512,256] &   [512,256] &        [512,256] &  [512,256] \\
\silMethod{}    &        [512,256] &     [512,256] &   [512,256] &        [512,256] &  [512,256] \\
\vrcMethod{}    &        [512,256] &     [512,256] &   [512,256] &        [512,256] &  [512,256] \\
\bottomrule
\end{tabular}
}
\caption{GCN layer sizes used by the encoder for each method. The layer sizes greatly affect the amount of memory used by each model (shown in \cref{fig:gpu_mem}).}
\label{tbl:encoder_sizes}
\vspace{-0.3in}
\end{table}

\begin{figure*}
\centering
    \subcaptionbox{%
        Mean total training time. The y-axis is on a log. scale.
        \label{fig:runtime}%
    }[.48\textwidth]
    {\includegraphics[width=0.48\textwidth]{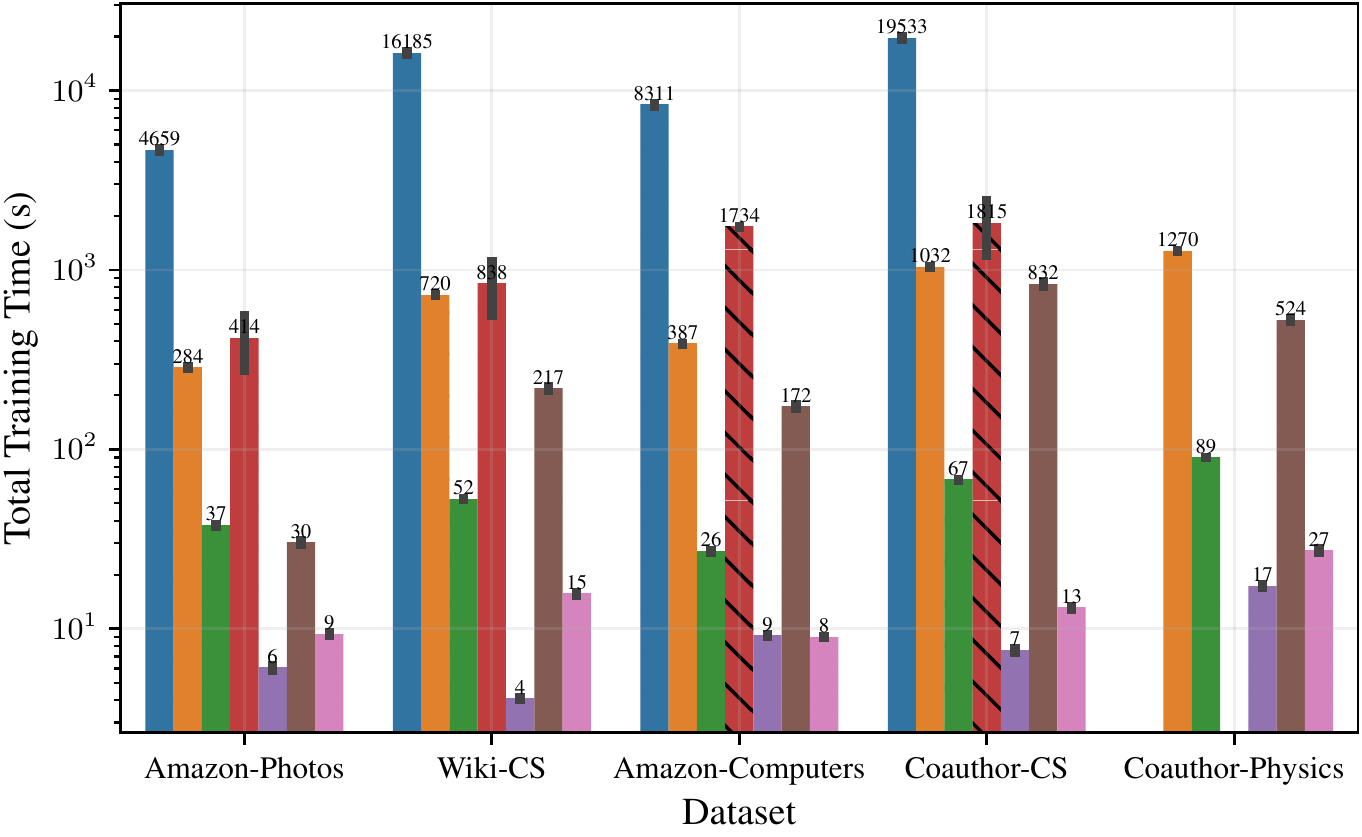}}%
    \hfill
    \subcaptionbox{%
        Max GPU memory usage.%
        \label{fig:gpu_mem}%
    }[.48\textwidth]%
    {\includegraphics[width=0.48\textwidth]{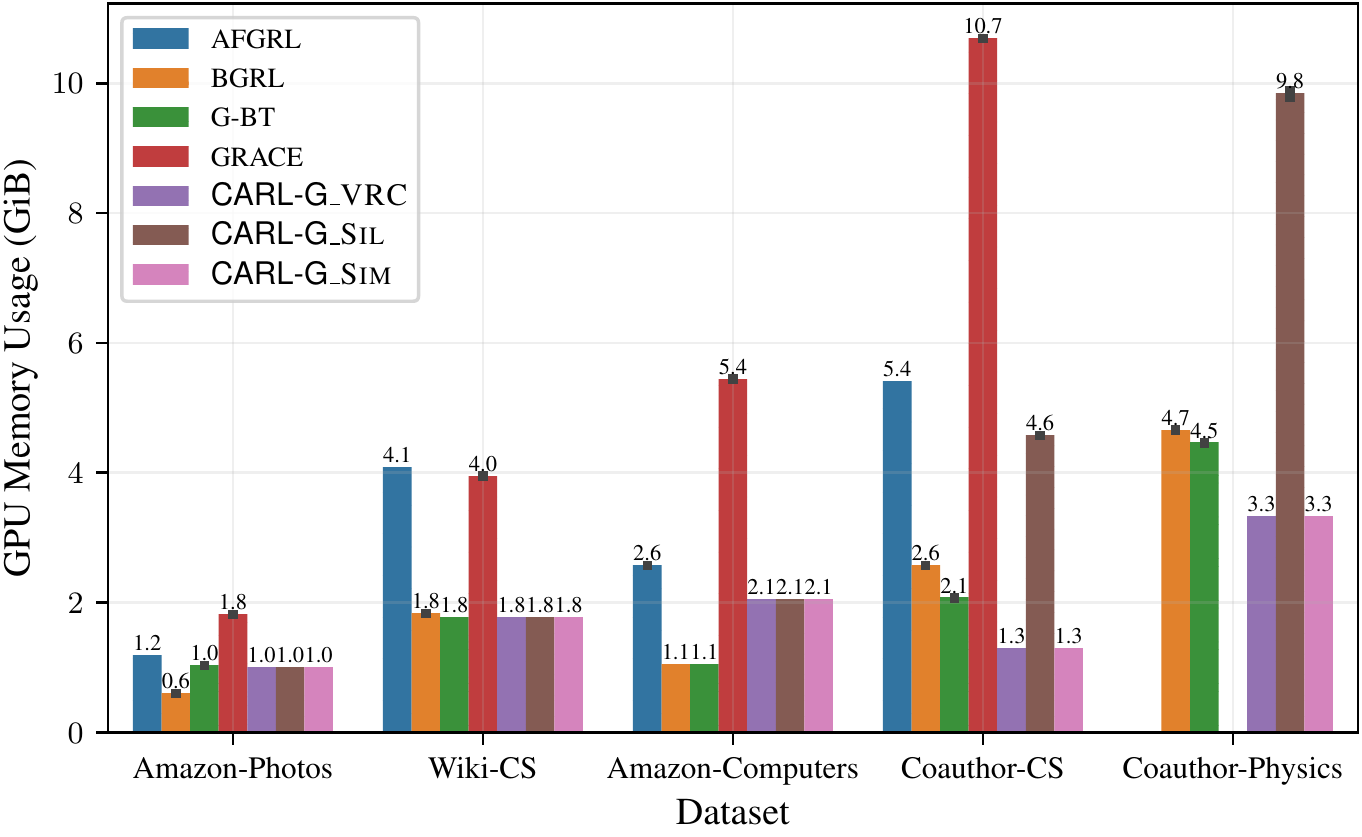}}%
    \caption{Mean total training time (left) and max GPU usage (right) for each model. \vrcMethod{} is the fastest with generally the least amount of memory used. \simpMethod{} uses the same amount of memory but is slightly slower. Note that not all of the baselines use the same encoder size---see \cref{tbl:encoder_sizes} for encoder sizes.}
    \Description{Mean total training time and GPU memory usage, where the VRC-based and simplified-silhouette-based methods perform the best.}
    \vspace{-0.1in}
\end{figure*}

\begin{figure}
\centering
    \subcaptionbox{%
        Acc. on \amazonphotos{}.%
        \label{fig:ablation_k_photos}%
    }[.48\columnwidth]
    {\includegraphics[width=0.48\columnwidth]{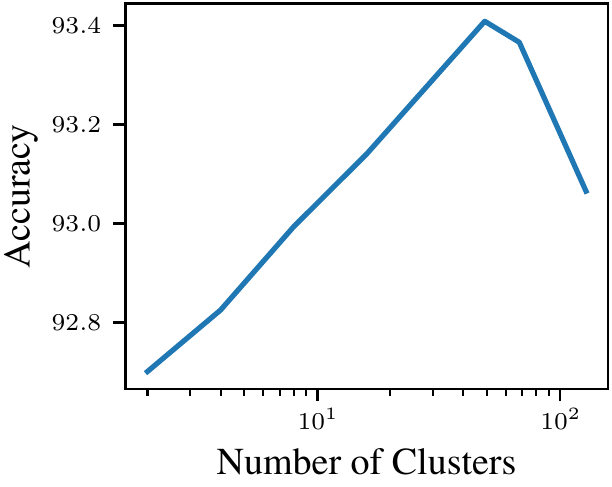}}%
    \hfill
    \subcaptionbox{%
        Acc. on \coauthorphysics{}.%
        \label{fig:ablation_k_phys}%
    }[.48\columnwidth]%
    {\includegraphics[width=0.48\columnwidth]{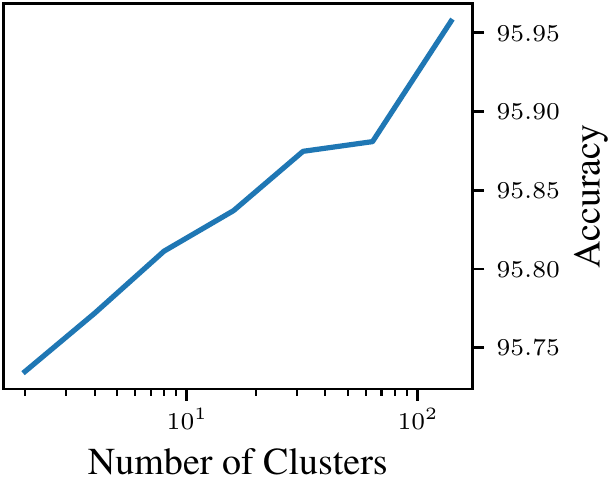}}%
    \caption{Node classification accuracy of \simpMethod{} on \amazonphotos{} and \coauthorphysics{} with a different number of clusters.}
    \Description{Node classification accuracy scores on amazon-photos and coauthor-physics with a different number of clusters. On amazon-photos, performance increases then drops, and on coauthor-physics, performance increases.}
\end{figure}

\begin{figure}[t]
    \centering
    \includegraphics[width=0.8\columnwidth]{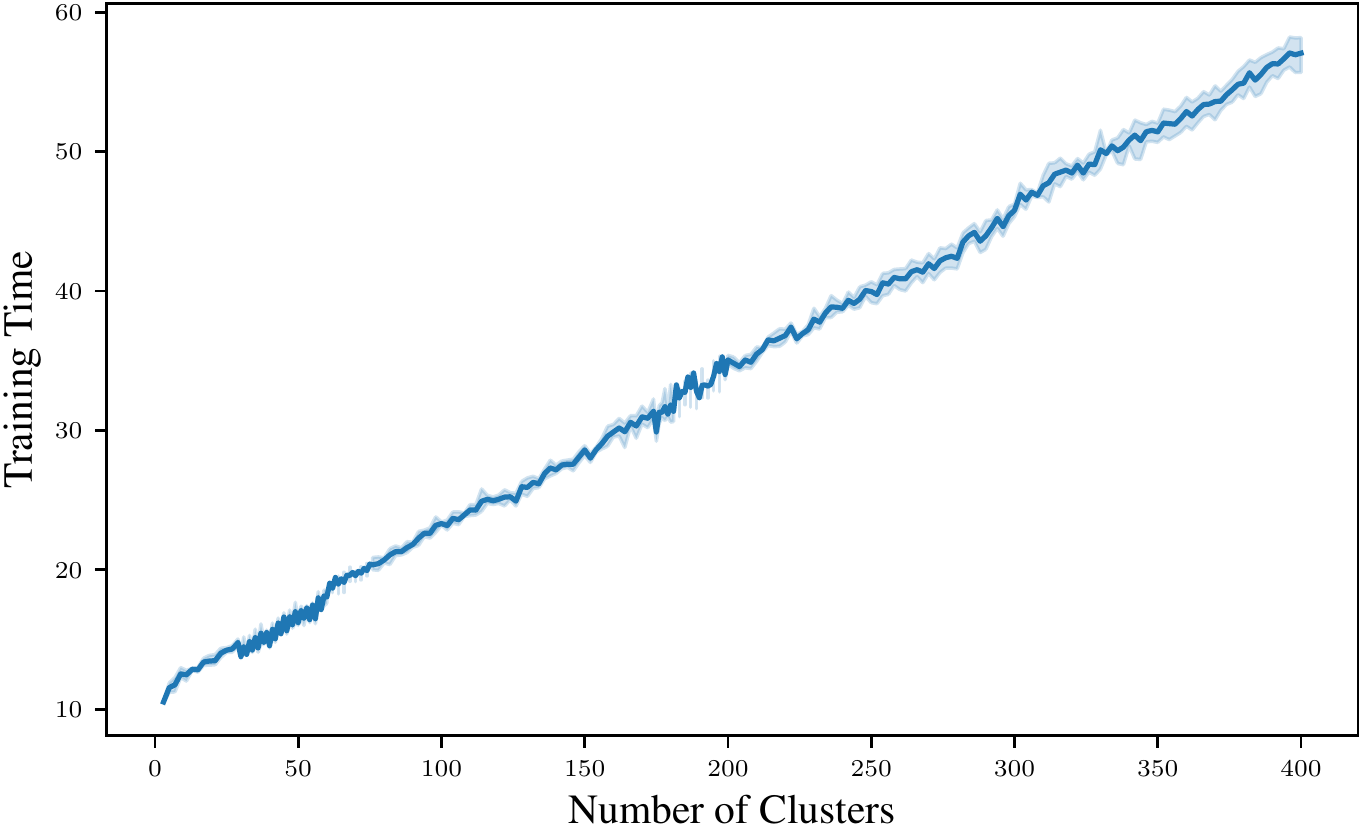}
    \caption{Training time versus number of clusters for \simpMethod{} on \coauthorphysics{}. As expected (see \cref{subsec:training}), the training time is linear with respect to the number of clusters.}
    \label{fig:train_time}
    \Description{Training time versus number of clusters for our method on coauthor-physics. The training time is linear with respect to the number of clusters.}
    \vspace{-0.2in}
\end{figure}%
\subsubsection{\methodname{} is fast.} In \cref{fig:runtime}, we show that \simpMethod{} is much faster than competing baselines, even in cases where the encoder is larger (see \cref{tbl:encoder_sizes}). BGRL is the best-performing node classification baseline, and \simpMethod{} is about 79\texttimes{} faster on \coauthorcs{}, and 57\texttimes{} faster on \coauthorphysics{}. AFGRL is by far the slowest method, requiring much longer to train.

\subsubsection{\method{} works with a fixed encoder size} We find that \method{} works well with a fixed encoder size (see \cref{tbl:encoder_sizes}). Unlike AFGRL, BGRL, GRACE, and G-BT, we fix the encoder size for \method{} across all datasets. This has practical advantages by allowing a user to fix the model size across datasets, thereby reducing the number of hyperparameters in the model. We observed that increasing the embedding size also increases the performance of our model across all datasets. This is not true for all of our baselines --- for example, \cite{afgrl} found that BGRL, GRACE, and GCA performance will often decrease in performance as embedding sizes increase. We limited our model embedding size to 256 for a fair comparison with other models.

\subsubsection{\simpMethod{} uses much less memory for the same encoder size.} When the encoder sizes of baseline methods are the same, \simpMethod{} uses much less memory than the baselines. The GPU memory usage of \simpMethod{} is also much lower than (about half) the memory usage of a BGRL model of the same size. This is because BGRL stores two copies of the encoder with different weights. The second encoder's weights gradually approach that of the first encoder during training but still takes up twice the space compared to single-encoder models like \simpMethod{} or G-BT~\cite{gbt}.

\subsubsection{\simpMethod{}'s runtime is linear with respect to the number of neighbors.} In \cref{subsec:training}, we mention that the runtime of \silMethod{}, the silhouette-based loss, is $O(n^2)$. This was the motivation for us to propose \simpMethod{} --- the simplified-silhouette-based loss which has an $O(nc)$ runtime instead. In \cref{fig:train_time}, we verify that this is indeed the case on \coauthorphysics{}, the largest of the 5 datasets.

\subsection{Ablation Studies}
\label{subsec:ablation}

We perform an ablation and sensitivity analysis on several aspects of our model. First, we examine the sensitivity of our model with respect to \numClusters{}---the number of clusters. Second, we examine the effect of using $k$-medoids instead of $k$-means. Finally, we try to inject more graph structural information during the clustering stage to see if we are losing any information.

\subsubsection{Effect of the number of clusters.}
\label{subsec:effect_of_clusters}
We perform sensitivity analysis on \numClusters{} --- the number of clusters (see \cref{fig:ablation_k_photos,fig:ablation_k_phys}). We find that, generally, the accuracy of our method goes up as the number of clusters increases. As the number of clusters continues to increase, the accuracy begins to drop. This implies that, much like traditional clustering~\cite{schubert2022stop}, there is some ``sweet spot'' for \numClusters{}. However, it is worth noting that this number does not directly correspond to the number of classes in the data and is much higher than $\numClusters{}$ for all of the datasets. DeepCluster~\cite{caron2018deep} also makes similar observations, where they find 10,000 clusters is ideal for ImageNet, despite there only being 1,000 labeled classes.
\vspace{0.5em}
\begin{wrapfigure}[8]{R}{0.35\columnwidth}
\begin{flushright}
\vspace{-0.37in}
\begin{minipage}[r]{.35\columnwidth}
\begin{table}[H]
    \caption{$k$-medoids w/ \simpMethod{}.}
    \label{tab:kmed_effect}
    \vspace{-0.10in}
    \centering
    \resizebox{\columnwidth}{!}{
    \begin{tabular}{l|c}
        \toprule
        \textbf{Dataset}     &   \textbf{Accuracy $\Delta$} \\
        \midrule
        \texttt{Computers}  &   -1.41   \\
        \texttt{Co.CS}      &   -0.33   \\
        \texttt{Co.Phy}     &   -0.07   \\
        \texttt{Photos}     &   -0.11   \\
        \bottomrule
    \end{tabular}
    }
\end{table}
\end{minipage}
\end{flushright}
\end{wrapfigure}

\subsubsection{$k$-medoids v.s. $k$-means.}
\label{subsec:kmed_vs_kmeans}
We study the effect of using $k$-medoids instead of \kmeans{} as our clustering algorithm. Both algorithms are partition-based clustering methods~\cite{xu2015comprehensive} and have seen optimizations in recent years~\cite{schubert2019faster,minibatchKmeans}. We find that the \kmeans{}-based \simpMethod{} generally performs better across all 4 of the evaluated datasets. The differences in node classification accuracy are shown in \cref{tab:kmed_effect}.

\subsubsection{Does additional information help?}
\label{subsec:additional_info_help}
It may appear as if we are losing graph information by working only with the embeddings. If this is the case, we should be able to improve the performance of our method by injecting additional information into the clustering step. We can do this by modifying the distance function of our clustering algorithm to the following:%
\begin{equation}
\textsc{Dist}(\vh_u, \vh_v) = \lambda \left\Vert \vh_u - \vh_v \right\Vert_2 + (1-\lambda) \mD_{u,v} \thinspace ,
\end{equation}%
where $\mD$ is the all-pairs shortest path (APSP) length matrix of $\gG$. This allows us to inject node neighborhood information into the clustering algorithm on top of the aggregation performed by the GNN. However, we find there is no significant change in performance for low $\lambda$ and performance decreases for high $\lambda$. This helps confirm the hypothesis that the GNN encoder is able to successfully embed a sufficient amount of structural data in the embedding.

\begin{table*}[!ht]
\centering
\begin{tabular}{c|lccccc}
  \toprule
                   & \textbf{Method} & \wikics                   & \amazoncomputers & \amazonphotos    & \coauthorcs      & \coauthorphysics \\
  \midrule
  \multirow{4}{*}{Traditional}
  & Raw Features     & 71.98 $\pm$ 0.00          & 73.81 $\pm$ 0.00 & 78.53 $\pm$ 0.00 & 90.37 $\pm$ 0.00 & 93.58 $\pm$ 0.00 \\
  & node2vec~\cite{node2vec}   & 71.79 $\pm$ 0.05          & 84.39 $\pm$ 0.08 & 89.67 $\pm$ 0.12 & 85.08 $\pm$ 0.03 & 91.19 $\pm$ 0.04 \\
  & DeepWalk~\cite{deepWalk14} & 74.35 $\pm$ 0.06          & 85.68 $\pm$ 0.06 & 89.44 $\pm$ 0.11 & 84.61 $\pm$ 0.22 & 91.77 $\pm$ 0.15 \\
  & DeepWalk~\cite{deepWalk14} + Feat. & 77.21 $\pm$ 0.03          & 86.28 $\pm$ 0.07 & 90.05 $\pm$ 0.08 & 87.70 $\pm$ 0.04 & 94.90 $\pm$ 0.09 \\
  \midrule
  \multirow{8}{*}{GNN SSL} & Random-Init~\cite{dgi}      & 78.95 $\pm$ 0.58          & 86.46 $\pm$ 0.38 & 92.08 $\pm$ 0.48 & 91.64 $\pm$ 0.29 & 93.71 $\pm$ 0.29 \\
                     & DGI~\cite{dgi}     & 75.35 $\pm$ 0.14          & 83.95 $\pm$ 0.47 & 91.61 $\pm$ 0.22 & 92.15 $\pm$ 0.63 & 94.51 $\pm$ 0.09 \\
                     & GMI~\cite{gmi}     & 74.85 $\pm$ 0.08          & 82.21 $\pm$ 0.31 & 90.68 $\pm$ 0.17 & OOM              & OOM              \\
                     & MVGRL~\cite{mvgrl} & 77.52 $\pm$ 0.08          & 87.52 $\pm$ 0.11 & 91.74 $\pm$ 0.07 & 92.11 $\pm$ 0.12 & 95.33 $\pm$ 0.03 \\
                     & GRACE~\cite{grace} & \textbf{80.14} $\pm$ 0.48 & 89.53 $\pm$ 0.35 & 92.78 $\pm$ 0.45 & 91.12 $\pm$ 0.20 & OOM              \\
                     & G-BT~\cite{gbt}    & 76.65 $\pm$ 0.62          & 88.14 $\pm$ 0.33 & 92.63 $\pm$ 0.44 & 92.95 $\pm$ 0.17 & 95.07 $\pm$ 0.17 \\
                     & AFGRL~\cite{afgrl} & 78.52 $\pm$ 0.72          & 89.55 $\pm$ 0.36 & 92.91 $\pm$ 0.26 & 93.14 $\pm$ 0.23 & OOM              \\
                     & BGRL~\cite{bgrl}   & \underline{79.98} $\pm$ 0.10 & \underline{89.90} $\pm$ 0.19 & 93.17 $\pm$ 0.30 & 93.34 $\pm$ 0.13 & 95.77 $\pm$ 0.05 \\
  \midrule
  \multirow{3}{*}{Proposed}
  &\vrcMethod       & 78.81 $\pm$ 0.49          & 88.90 $\pm$ 0.39 & 93.31 $\pm$ 0.36 & 93.18 $\pm$ 0.31 & 95.92 $\pm$ 0.14 \\
  &\simpMethod      & 79.58 $\pm$ 0.60          & \textbf{90.14} $\pm$ 0.33 & \underline{93.37} $\pm$ 0.37 & \underline{93.36} $\pm$ 0.39 & \underline{95.96} $\pm$ 0.09 \\
  &\silMethod       & 79.73 $\pm$ 0.44          & \textbf{90.14} $\pm$ 0.34 & \textbf{93.44} $\pm$ 0.32 & \textbf{93.37} $\pm$ 0.33 & \textbf{95.97} $\pm$ 0.14 \\
  \midrule
  \multirow{2}{*}{Supervised}
  &GCA~\cite{gca}   & 78.35 $\pm$ 0.05          & 88.94 $\pm$ 0.15 & 92.53 $\pm$ 0.16 & 93.10 $\pm$ 0.01 & 95.73 $\pm$ 0.03 \\
  &Supervised GCN~\cite{gcn}  & 77.19 $\pm$ 0.12          & 86.51 $\pm$ 0.54 & 92.42 $\pm$ 0.22 & 93.03 $\pm$ 0.31 & 95.65 $\pm$ 0.16 \\
  \bottomrule
\end{tabular}
\caption{Table of node classification accuracy. Bolded entries indicate the highest accuracy for that dataset. Underlined entries indicate the second-highest accuracy. OOM indicates out-of-memory.}
\label{tab:classification}
\vspace{-0.25in}
\end{table*}

\begin{table}[t]
    \centering
    \small
    \scale[0.94]{\begin{tabular}{c|c|cccc|c}
        \toprule
        \multicolumn{2}{c|}{} & GRACE & GCA & BGRL & AFGRL & \simpMethod{}  \\
        \midrule
        \multirow{2}{*}{WikiCS}     & Hits@5  & 0.775 & 0.779          & 0.774          & 0.781 & \textbf{0.789} \\
                                    & Hits@10 & 0.765 & 0.767 & 0.762          & 0.766          & \textbf{0.775} \\
        \midrule
        \multirow{2}{*}{Computers}  & Hits@5  & 0.874 & 0.883          & 0.895          & \textbf{0.897} & 0.881 \\
                                    & Hits@10 & 0.864 & 0.874          & 0.886          & \textbf{0.889} & 0.871 \\
        \midrule
        \multirow{2}{*}{Photo}      & Hits@5  & 0.916 & 0.911          & \textbf{0.925} & 0.924          & 0.922 \\
                                    & Hits@10 & 0.911 & 0.905          & \textbf{0.920} & 0.917          & 0.917 \\
        \midrule
        \multirow{2}{*}{Co.CS}      & Hits@5  & 0.910 & 0.913          & 0.911          & \textbf{0.918} & 0.916 \\
                                    & Hits@10 & 0.906 & 0.910          & 0.909          & \textbf{0.914} & \textbf{0.914} \\
        \midrule
        \multirow{2}{*}{Co.Physics} & Hits@5  & OOM    & OOM             & 0.950          & \textbf{0.953} & \textbf{0.953} \\
                                    & Hits@10 & OOM    & OOM             & 0.946          & 0.949 & \textbf{0.950} \\
        \bottomrule
    \end{tabular}}
    \caption{Performance on similarity search. Surprisingly, \method{} performs fairly well on this task, despite not being explicitly optimized for this task (unlike AFGRL, which uses KNN during training).}
    \label{tab:similarity_search}
    \vspace{-0.35in}
\end{table}

\subsection{Implementation Details}
\label{subsec:impl_details}
For fair evaluation with other baselines, we elect to use a standard GCN~\cite{gcn} encoder. Our focus is on the overall framework rather than the architecture of the encoder. All of our baselines also use GCN layers. Following \cite{bgrl,afgrl}, we use two-layer GCNs for all datasets and use a two-layer MLP for the predictor network.%
We implement our model with PyTorch~\cite{pytorch} and PyTorch Geometric~\cite{pyg}. A copy of our code is publically available at \url{https://github.com/willshiao/carl-g}. We adapt the code from \cite{shiao2022link}, which contains implementations of BGRL~\cite{bgrl}, GRACE~\cite{grace}, and GBT~\cite{gbt} to use the split and downstream tasks from \cite{afgrl}. We also use the official implementation of AFGRL~\cite{afgrl}. We perform 50 runs of Bayesian hyperparameter optimization on each dataset and task for each of our 3 methods. The hyperparameters for our results are available at that link. All of our timing experiments were conducted on Google Cloud Platform using 16 GB NVIDIA V100 GPUs.\arxivOnly{ For more implementation details, please refer to \cref{app:additional_details}.}

\subsection{Limitations \& Future Work}
\label{subsec:limitations}

While our proposed framework has been shown to be highly effective in terms of both training speed and performance across the 3 tasks, there are also some limitations to our approach. One such limitation is that we use hard clustering assignments, i.e., each node is assigned to exactly one cluster. This can pose issues for multi-label datasets like the Protein-Protein Interaction (PPI)~\cite{ppi} graph dataset. One possible solution to this problem is to perform soft clustering and use a weighted average of CVIs for second/tertiary cluster assignments, but this would require major modifications to our method, and we reserve an exploration of this for future work.

\section{Additional Related Work}
\label{sec:related}

\paragraph{Deep Clustering.} A related, but distinct, area of work is deep clustering, which uses a neural network to directly aid in the clustering of data points~\cite{clusterWithDeepLearning}. %
However, the fundamental goal of deep clustering differs from graph representation learning in that the goal is to produce a clustering of the graph nodes rather than just representations of them. An example of this is DEC~\cite{xie2016unsupervised}, which uses a deep autoencoder with KL divergence to learn cluster centers, which are then used to cluster points with $k$-means.

\paragraph{Clustering for Representation Learning.} There exists work that uses clustering to learn embeddings~\cite{zhaoautogda,caron2018deep,yang2017towards}. Notably, DeepCluster~\cite{caron2018deep} trains a CNN with standard cross-entropy loss on pseudo-labels produced by \kmeans{}. Similarly, \cite{yang2017towards} simultaneously performs clustering and dimensionality reduction with a deep neural network. The key difference between those models and our proposed framework is that we use graph data and CVI-based losses instead of traditional supervised losses.

\paragraph{Clustering for Efficient GNNs.} There also exists work that uses clustering to speed up GNN training and inference. Cluster-GCN~\cite{chiang2019cluster} samples node blocks produced by graph clustering algorithms and speeds up GCN layers by limiting convolutions within each block for training and inference. However, it is worth noting that it computes a fixed clustering, rather than updating the clustering jointly with our model (unlike \methodname{}). %
FastGCN~\cite{chen2018fastgcn} does not explicitly cluster nodes but uses Monte Carlo importance sampling to similarly reduce neighborhood size and improve the speed of GCNs. 

\paragraph{Efficient \kmeans{}.}
\label{subsec:fast_kmeans}
Over the years, many variants and improvements to \kmeans{} have been proposed. The original method proposed to solve the \kmeans{} assignment problem was Lloyd's algorithm~\cite{lloyds}. Since then, several more efficient algorithms have been developed. \citet{bottou1994convergence} propose using stochastic gradient descent for finding a solution. \citet{minibatchKmeans} further builds on this work by proposing a \kmeans{} variant that uses mini-batching to dramatically speed up training. Finally, approximate nearest-neighbor search libraries like FAISS~\cite{johnson2019billion} %
allow for efficient querying of nearest neighbors, further speeding up training.

\section{Conclusion}
\label{sec:conclusions}
In this work, we are the first to introduce Cluster Validation Indexes in the context of graph representation learning. We propose a novel CVI-based framework and investigated trade-offs between different CVI variants. We find that
the loss function based on the simplified silhouette achieves the best overall performance to runtime ratio. It outperforms all baselines across 4/5 datasets in node classification and node clustering tasks, training up to 79\texttimes{} faster than the best-performing baseline. It also performs on-par with the best performing node similarity search baseline while training 1,500\texttimes{} faster. Moreover, to more comprehensively understand the effectiveness of \method{}, we establish a theoretical connection between the silhouette and the well-established margin loss.

\begin{acks}
  The authors would like to thank UCR Research Computing and Ursa Major for the Google Cloud resources provided to support this research.
This research was also supported by the National Science Foundation under CAREER grant no. IIS 2046086 and CREST Center for Multidisciplinary Research Excellence in Cyber-Physical Infrastructure Systems (MECIS) grant no. 2112650, by the Agriculture and Food Research Initiative Competitive Grant no. 2020-69012-31914 from the USDA National Institute of Food and Agriculture and by the Combat Capabilities Development Command Army Research Laboratory and was accomplished under Cooperative Agreement Number W911NF-13-2-0045 (ARL Cyber Security CRA). The views and conclusions contained in this document are those of the authors and should not be interpreted as representing the official policies, either expressed or implied, of the Combat Capabilities Development Command Army Research Laboratory or the U.S. Government. The U.S. Government is authorized to reproduce and distribute reprints for Government purposes not withstanding any copyright notation here on.%

\end{acks}

\bibliographystyle{ACM-Reference-Format}
\bibliography{bib/vagelis_refs,bib/refs.bib}

\clearpage
\appendix
\section{Appendix}
\label{appendix}

\subsection{Full Proof of Equivalency to Margin Loss}

\begin{proof}
\label{proof:full_sil}
For ease of analysis, we work with the simplified silhouette loss (\cref{def:simp_sil}) and the non-max margin loss (\cref{def:margin}).
Let $\gL$ be the set of class labels, and $\gL_u$ be the class label for node $u$. Let $\gC_u$ be the cluster assignment for node $u$, and $c = |C|$ be the number of clusters/classes. We define the expected inter-class and intra-class distances as follows:%
    \begin{equation}
    \E \left[ \tDist(\vh_u, \vh_v) \right] =
        \begin{cases}
            \alpha & \text{if } \gL_u = \gL_v \\
            \beta & \text{otherwise} \\
        \end{cases} \thinspace,
    \end{equation}%
where $\alpha,\beta \in \R^+$. Next, let%
    \begin{equation}
    P\left( (u,v) \in \gE \right) =
        \begin{cases}
            p & \text{if } \gL_u = \gL_v \\
            q & \text{otherwise} \\
        \end{cases} \thinspace, %
    \end{equation}
i.e., $\gG$ follows a stochastic block model with a probability matrix $P \in [0, 1]^{c \times c}$ of the form:%
\begin{equation}
\label{eqn:app_sbm_matrix}
    P = \begin{bNiceMatrix}
        p       & q      &  q       & q \\
        q       & \Ddots &  \Ddots  & \Ddots \\
        q       & \Ddots &  \Ddots  & \Ddots \\
        q       & \Ddots &  \Ddots  & p \\
    \end{bNiceMatrix} \thinspace .
\end{equation}
Note that $q$ does not necessarily equal $1-p$. We define the inter-class clustering error rate $\epsilon$ and intra-class clustering error rate $\delta$ as follows:%
    \begin{align}
    P (\gC_u \neq \gC_v | \gL_u = \gL_v) &= \epsilon \\
    P (\gC_u = \gC_v | \gL_u \neq \gL_v) &= \delta \thinspace .
    \end{align}%
To find $\Ebr{s_s(u)}$, we first find $\Ebr{a(u)}$ and $\Ebr{b_s(u)}$:
\begin{align}
    \Ebr{a(u)} &= \Ebr{\frac{1}{| \gC_i | - 1} \sum_{v \in (\gC_i - \{ u\})} \tDist(\vh_u, \vh_v)} \\
    &= \Ebrs{v}{\tDist(\vh_u, \vh_v) \vert \gC_u = \gC_v} \\
    &= P (\gL_u = \gL_v) \cdot P (\gC_u = \gC_v \vert \gL_u = \gL_v) \\
    &\hspace{0.7em} \cdot \Ebrs{v} {\tDist(\vh_u, \vh_v) \vert \gC_u = \gC_v \land \gL_u = \gL_v } \nonumber \\
    &\hspace{0.7em} + P (\gL_u \neq \gL_v) \cdot P (\gC_u = \gC_v \vert \gL_u \neq \gL_v) \nonumber \\
    &\hspace{0.7em} \cdot \Ebrs{v} {\tDist(\vh_u, \vh_v ) \vert \gC_u = \gC_v \land \gL_u \neq \gL_v } \nonumber \\%
    &= \left( \frac{1}{c} \right) (1-\epsilon) \alpha + \left(1 - \frac{1}{c} \right) \delta \beta \\
    &= \frac{\alpha}{c} - \frac{\epsilon \alpha}{c} + \delta \beta - \frac{\delta \beta}{c}
\end{align}%
and%
\begin{align}
    \Ebr{b_s(u)} &= \Ebr{\frac{1}{c-1} \sum_{j \neq i} \frac{1}{|C_j|}\sum_{v \in \gC_j} \tDist (\vh_u, \vh_v)} \\
    &= \Ebrs{j \neq i}{\frac{1}{|C_j|}\sum_{v \in \gC_j} \tDist (\vh_u, \vh_v)} \\
    &= \Ebrs{v}{\tDist(\vh_u, \vh_v \vert \gC_u \neq \gC_v)} \\
    &= P (\gL_u = \gL_v) \cdot P (\gC_u \neq \gC_v \vert \gL_u = \gL_v) \cdot \\
    &\hspace{0.7em} \Ebrs{v} {\tDist(\vh_u, \vh_v) \vert \gC_u \neq \gC_v \land \gL_u = \gL_v } \nonumber \\
    &\hspace{0.7em} + P (\gL_u \neq \gL_v) \cdot P (\gC_u \neq \gC_v \vert \gL_u \neq \gL_v) \nonumber \\
    &\hspace{0.7em} \cdot \Ebrs{v} {\tDist(\vh_u, \vh_v ) \vert \gC_u \neq \gC_v \land \gL_u \neq \gL_v } \nonumber \\
    &= \left( \frac{1}{c} \right) \epsilon \alpha  + \left(1 - \frac{1}{c} \right) (1 - \delta) \beta \\
    &= \frac{\epsilon \alpha}{c} + \beta \left( 1 - \delta -\frac{1}{c} + \frac{\delta}{c} \right) \\
    &= \frac{\epsilon \alpha}{c} + \beta - \beta\delta - \frac{\beta}{c} + \frac{\delta \beta}{c} \thinspace .
\end{align}%
Now, we can find $\Ebr{s_s(u)}$:%
\begin{align}
    \Ebr{s_s(u)} &= \Ebr{b_s(u)} - \Ebr{a(u)} \\
    &= \frac{\epsilon \alpha}{c} + \beta - \beta\delta - \frac{\beta}{c} + \frac{\delta \beta}{c} - \left( \frac{\alpha}{c} - \frac{\epsilon \alpha}{c} + \delta \beta - \frac{\delta \beta}{c} \right) \\
    &= \frac{\epsilon \alpha}{c} + \beta - \beta\delta - \frac{\beta}{c} + \frac{\delta \beta}{c} - \frac{\alpha}{c} + \frac{\epsilon \alpha}{c} - \delta \beta + \frac{\delta \beta}{c} \\
    &= \frac{2 \epsilon \alpha}{c} + \frac{2 \delta \beta}{c} - \frac{\beta}{c} - \frac{\alpha}{c} + \beta - 2 \delta \beta \thinspace .
\end{align}

Taking its limit as $\epsilon,\delta \rightarrow 0$, we find
\begin{align}
\lim_{\epsilon,\delta \rightarrow 0} \left( - \frac{2 \epsilon \alpha}{c} - \frac{2 \delta \beta}{c} + \frac{\beta}{c} + \frac{\alpha}{c} - \beta + 2 \delta \beta \right) = \frac{\beta}{c} + \frac{\alpha}{c} - \beta \thinspace .
\end{align}

We similarly break down the margin loss into two terms:
\begin{align}
&\Ebr{\frac{1}{\neigh(u)} \sum_{v \in \neigh(u)} \tDist(\vh_u,\vh_v)} \\
&= \Ebrs{v}{\tDist (\vh_u, \vh_v) \vert (u,v) \in \gE } \\
&= P(\gL_u = \gL_v) \cdot P((u,v) \in \gE \vert \gL_u = \gL_v) \\
&\hspace{0.7em} \cdot \Ebrs{v}{\tDist (\vh_u, \vh_v) \vert (u,v) \in \gE \land \gL_u = \gL_v } \nonumber \\
&\hspace{0.7em} + P(\gL_u \neq \gL_v) \cdot P((u,v) \in \gE \vert \gL_u \neq \gL_v) \nonumber \\
&\hspace{0.7em} \cdot \Ebrs{v}{\tDist (\vh_u, \vh_v) \vert (u,v) \in \gE \land \gL_u \neq \gL_v } \nonumber \\
&= \left( \frac{1}{c} \right) \alpha p + \left( 1 - \frac{1}{c} \right) \beta q \\
&= \frac{\alpha p}{c} + \beta q - \frac{\beta q}{c}
\end{align}%
and
\begin{align}
&\Ebr{\frac{1}{|\gV - \neigh(u) - \{ u \}|}\sum_{t \not \in \neigh(u)} \tDist(\vh_u,\vh_t)} \\
&= \Ebrs{v}{\tDist(\vh_u, \vh_v) \vert (u,v) \not \in \gE} \\
&= P(\gL_u = \gL_v) \cdot P((u,v) \not \in \gE \vert \gL_u = \gL_v) \\
&\hspace{0.7em} \cdot \Ebrs{v}{\tDist (\vh_u, \vh_v) \vert (u,v) \not \in \gE \land \gL_u = \gL_v } \nonumber \\
&\hspace{0.7em} + P(\gL_u \neq \gL_v) \cdot P((u,v) \not \in \gE \vert \gL_u \neq \gL_v) \nonumber \\
&\hspace{0.7em} \cdot \Ebrs{v}{\tDist (\vh_u, \vh_v) \vert (u,v) \not \in \gE \land \gL_u \neq \gL_v } \nonumber \\
&= \left( \frac{1}{c} \right) (1-p) \alpha + \left( 1 - \frac{1}{c} \right) (1-q) \beta \\
&= \frac{\alpha}{c} - \frac{\alpha p}{c} + \beta - \beta q - \frac{\beta}{c} + \frac{\beta q}{c} \thinspace .
\end{align}%
Re-substituting the terms into the margin loss equation, we get
\begin{align}
 &\E \left[ \frac{1}{|\neigh(u)|} \sum_{v \in \neigh(u)} \tDist(\vh_u,\vh_v) \right. \\
 &\hspace{0.7em} \left. - \frac{1}{|\gV - \neigh(u) - \{ u \}|}\sum_{t \not \in \neigh(u)} \tDist(\vh_u,\vh_t) \right] \nonumber \\
 &= \Ebr{\frac{1}{|\neigh(u)|} \sum_{v \in \neigh(u)} \tDist(\vh_u,\vh_v)} \\
 &\hspace{0.7em} - \Ebr{\frac{1}{|\gV - \neigh(u) - \{ u \}|}\sum_{t \not \in \neigh(u)} \tDist(\vh_u,\vh_t)} \nonumber \\
 &= \frac{\alpha p}{c} + \beta q - \frac{\beta q}{c} - \left( \frac{\alpha}{c} - \frac{\alpha p}{c} + \beta - \beta q - \frac{\beta}{c} + \frac{\beta q}{c} \right) \\
 &=\frac{\alpha p}{c} + \beta q - \frac{\beta q}{c} - \frac{\alpha}{c} + \frac{\alpha p}{c} - \beta + \beta q + \frac{\beta}{c} - \frac{\beta q}{c} \\
 &= \frac{2\alpha p}{c} + 2\beta q - \frac{2\beta q}{c} - \frac{\alpha}{c} + \frac{\beta}{c} - \beta \thinspace .
\end{align}%
Taking its limit as $p \rightarrow 1, q \rightarrow 0$:%
\begin{align}
\lim_{p \rightarrow 1, q \rightarrow 0} \left( \frac{2\alpha p}{c} + 2\beta q - \frac{2\beta q}{c} - \frac{\alpha}{c} + \frac{\beta}{c} - \beta \right) \\
= \frac{2 \alpha}{c} - \frac{\alpha}{c} + \frac{\beta}{c} - \beta = \frac{\alpha}{c} + \frac{\beta}{c} - \beta \\
\therefore \lim_{p \rightarrow 1, q \rightarrow 0} \Ebr{\textsc{ml}(u)} = \lim_{\epsilon,\delta \rightarrow 0} \Ebr{L_s(u)} 
\thinspace .
\end{align}
\end{proof}

\begin{table*}
\begin{tabular}{l|lcccc}
Method & Dataset & Max GPU Memory & Mean CPU Memory & Training Time & Layer Sizes \\
\toprule
\multirow[c]{4}{*}{AFGRL} & \amazoncomputers{} & 2,637 & 2,671 & 8,311.94 & [512] \\
& \amazonphotos{} & 1,221 & 2,615 & 4,659.91 & [512] \\
& \coauthorcs{} & 5,537 & 3,038 & 19,533.74 & [1024] \\
& \wikics{} & 4,177 & 2,647 & 16,185.56 & [1024] \\
\midrule
\multirow[c]{5}{*}{BGRL} & \amazoncomputers{} & 1,081 & 2,289 & 387.03 & [256,128] \\
& \amazonphotos{} & 615 & 2,267 & 284.29 & [256,128] \\
& \coauthorcs{} & 2,637 & 2,722 & 1,032.14 & [512,256] \\
& \coauthorphysics{} & 4,769 & 3,362 & 1,270.93 & [256,128] \\
& \wikics{} & 1,877 & 2,240 & 720.37 & [512,256] \\
\midrule
\multirow[c]{5}{*}{\simpMethod{}} & \amazoncomputers{} & 2,100 & 2,910 & 8.97 & [512,256] \\
& \amazonphotos{} & 1,032 & 2,875 & 9.29 & [512,256] \\
& \coauthorcs{} & 1,325 & 3,352 & 13.07 & [512,256] \\
& \coauthorphysics{} & 3,405 & 3,998 & 27.11 & [512,256] \\
& \wikics{} & 1,816 & 2,857 & 15.64 & [512,256] \\
\midrule
\multirow[c]{5}{*}{\silMethod{}} & \amazoncomputers{} & 2,100 & 3,435 & 172.26 & [512,256] \\
& \amazonphotos{} & 1,032 & 3,320 & 30.13 & [512,256] \\
& \coauthorcs{} & 4,682 & 4,273 & 832.25 & [512,256] \\
& \coauthorphysics{} & 10,074 & 4,882 & 524.27 & [512,256] \\
& \wikics{} & 1,816 & 3,392 & 217.63 & [512,256] \\
\midrule
\multirow[c]{5}{*}{\vrcMethod{}} & \amazoncomputers{} & 2,100 & 2,875 & 9.15 & [512,256] \\
& \amazonphotos{} & 1,032 & 2,843 & 6.04 & [512,256] \\
& \coauthorcs{} & 1,325 & 3,320 & 7.57 & [512,256] \\
& \coauthorphysics{} & 3,405 & 3,964 & 17.22 & [512,256] \\
& \wikics{} & 1,816 & 2,826 & 4.08 & [512,256] \\
\bottomrule
\end{tabular}
\caption{Performance of various methods.}
\end{table*}

\subsection{Meaning of Ideal Conditions}

Our analysis in \cref{proof:full_sil} aims to show that the more traditional margin-based losses and silhouette-based losses are sensitive to different parameters and their equivalence in the best-case scenario. Here, we briefly summarize what each of those ideal conditions means:

\begin{itemize}
    \item $p \rightarrow 1$: We approach the case where an edge exists between each node of the same class.
    \item $q \rightarrow 0$: We approach the case where an edge never exists between nodes of different classes.
    \item $\epsilon \rightarrow 0$: We approach the case where we always place two nodes in the same cluster if they are the same class.
    \item $\delta \rightarrow 0$: We approach the case where we never place two nodes in the same cluster if they are in different classes.
\end{itemize}

Essentially, the ideal case for a margin-loss GNN is $p \rightarrow 1$ and $q \rightarrow 0$. Conversely, the ideal case for \methodname{} is $\epsilon \rightarrow 0$, $\delta \rightarrow 0$. As we mentioned in \cref{subsec:ml_equiv}, silhouette-based loss relies on the clustering error rate rather than the inherent properties of the graph. We show that a margin-loss GNN is exactly equivalent to a mean-silhouette-loss GNN under the above conditions; however, it also follows that some equivalence can also be drawn between them for different non-ideal values of $p$, $q$, $\epsilon$, and $\delta$, but we feel such analysis is out of the scope of this work.

\subsection{Additional Experiment Details}
\label{app:additional_details}
We ran our experiments on a combination of local and cloud resources. All non-timing experiments were run on an NVIDIA RTX A4000 or V100 GPU, both with 16 GB of VRAM. All timing experiments were conducted on a Google Cloud Platform (GCP) instance with 12 CPU Intel Skylake cores, 64 GB of RAM, and a 16 GB V100 GPU. Accuracy means and standard deviations are computed by re-training the classifier on 5 different splits. The code and exact hyperparameters for this paper can be found online at \url{https://github.com/willshiao/carl-g}.

\subsection{Dataset Statistics}
\begin{table}[H]
\centering
\begin{tabular}{l||cccc} 
\toprule
\textbf{Dataset}    & \textbf{Nodes}    & \textbf{Edges}   & \textbf{Features}  & \textbf{Classes}  \\ \midrule
\wikics & 11,701 & 216,123 & 300 & 10\\
\coauthorcs &18,333 &163,788 &6,805 & 15\\   
\coauthorphysics &34,493 &495,924 &8,415 & 5 \\      
\amazoncomputers &13,752 &491,722 &767 & 10 \\
\amazonphotos &7,650 &238,162 &745 & 8 \\

\bottomrule
\end{tabular}
\caption{Statistics for the datasets used in our work.}
\label{tab:dset_statistic}
\end{table}

\end{document}